\documentclass[runningheads]{llncs}
\usepackage[T1]{fontenc}
\usepackage{graphicx}
\usepackage{amsmath}
\usepackage{xspace}
\usepackage{bbm}
\usepackage{tikz}
\usetikzlibrary{shapes.geometric, arrows.meta, positioning}
\usepackage{multirow}
\usepackage{amssymb}
\usepackage[utf8]{inputenc}
\usepackage{tabularx}
\usepackage{booktabs}
\usepackage[algo2e,ruled,vlined,linesnumbered]{algorithm2e}
\usepackage{xcolor}

\usepackage{siunitx}
\usepackage{subcaption}
\usepackage{placeins}
\usepackage{bm}
\usepackage[normalem]{ulem}

\newcommand{\best}[2]{\uline{$\mathbf{#1}$ {\scriptsize$\mathbf{\pm\,#2}$}}}

\newcommand{\bestsolver}[2]{%
  \uline{$\mathbf{#1}$}%
  \phantom{{\scriptsize$\mathbf{\pm\,#2}$}}%
}

\newcommand{\boldscore}[2]{$\mathbf{#1}$ {\scriptsize$\mathbf{\pm\,#2}$}}

\newcommand{\oea}{\mbox{${(1 + 1)}$~EA}\xspace}

\newcommand{\LO}{\textsc{Leading\-Ones}\xspace}
\newcommand{\leadingones}{\LO}
\newcommand{\DLB}{\textsc{DLB}\xspace}
\newcommand{\deceptiveleadingblocks}{\textsc{Deceptive\-Leading\-Blocks}\xspace}

\newcommand{\cliff}{\textsc{Cliff}\xspace}

\newcommand{\jump}{\textsc{Jump}\xspace}

\newcommand{\R}{\ensuremath{\mathbb{R}}}

\newcommand{\N}{\ensuremath{\mathbb{N}}} 

\newcommand{\probLB}{p_{\mathrm{LB}}}

\newcommand{\rmut}{r_{\mathrm{mut}}}
\newcommand{\rc}{r_{\mathrm{c}}}
\newcommand{\nn}{n} 
\newcommand{\Eint}{E_{\mathrm{int}}}
\newcommand{\phiMIS}{\phi_{\mathrm{MIS}}}
\newcommand{\phiMC}{\phi_{\mathrm{MC}}}
\newcommand{\FMC}{F_{\mathrm{MC}}}
\newcommand{\FMIS}{F_{\mathrm{MIS}}}

\newcommand{\assign}{\leftarrow}

\let\originalleft\left
\let\originalright\right
\renewcommand{\left}{\mathopen{}\mathclose\bgroup\originalleft}
\renewcommand{\right}{\aftergroup\egroup\originalright}

\SetKw{KwWP}{with prob.}
\SetKw{KwElse}{else}

\begin{document}
{\sloppy
\title{A Fresh Look at Lamarckian Evolution and the Baldwin Effect}


\author{Inès Benito\inst{1}\orcidID{0009-0005-2435-2152} \and
Johannes F. Lutzeyer\inst{1}\orcidID{0000-0001-9512-7917} \and
Benjamin Doerr\inst{1}\orcidID{0000-0002-9786-220X}}
%

\institute{Laboratoire d'Informatique (LIX), CNRS, École Polytechnique, Institut Polytechnique de Paris, France\\
\email{\{firstname.lastname\}@polytechnique.edu}}
\maketitle              
\begin{abstract}
Baldwinian and Lamarckian evolution have existed for a long time in evolutionary algorithms (EAs) without ever dominating the academic literature or practical applications. In this work, we use modern empirical and theoretical methods to revisit Lamarckian and Baldwinian evolution and rigorously compare them with the generic Darwinian evolution. On the empirical side, we run a comprehensive suite of experiments on graphs from six different datasets from the recent GraphBench benchmark on Maximum Independent Set and Maximum Cut problems. Our results show that Baldwinian and Lamarckian evolution consistently outperform Darwinian evolution, confirming the great potential of local search augmented evolutionary algorithms. Notably, in the great majority of cases, all EAs outperform recent deep learning baselines and approach the performance of highly specialised heuristic and exact solvers. We furthermore report a high-performing set of generalist parameters for all studied evolution types that we hope will be of use to practitioners in future. 
On the theoretical side, we extend the existing DeceptiveLeadingBlocks benchmark to arbitrary block length~$k$. For all constant $k$, we then prove asymptotically tight runtime bounds for the $(1+1)$ EA in the three evolution types on this benchmark. For Baldwinian evolution, these are independent of~$k$, whereas for the other two evolution types, the runtimes steeply increase with growing value of~$k$.

\keywords{Memetic Algorithms \and Baldwin Effect \and Runtime Analysis.}

\end{abstract}

\section{Introduction}
\label{sec:introduction}

Evolutionary algorithms (EAs) are population-based metaheuristics inspired by the principles of natural selection~\cite{eiben2015}. Starting with a population of candidate solutions, they iteratively apply evolutionary operators (crossover, mutation and selection) to retain the fittest individuals for the next generation. Unlike random search, selection pressure biases the search towards promising regions of the solution space, enabling EAs to scale to large and complex problems.

In some cases, standard EAs are effective at exploration, rapidly identifying promising regions of the search space, but can be weaker at exploitation~\cite{eiben2015}. EAs augmented with a local search step have been proposed to address this imbalance, giving rise to memetic algorithms~\cite{moscato1989}. Two distinct paradigms have emerged. In Lamarckian evolution, each offspring is replaced by its locally improved version before selection~\cite{whitley1994}. In Baldwinian evolution~\cite{baldwin1896,hinton1987}, the offspring is left unchanged but evaluated on the fitness of the solution returned by local search. A third approach, also known as ``partial Lamarckianism''~\cite{houck1997}, combines Lamarckian and Baldwinian evolution, which we refer to as LB evolution. In LB evolution, the locally improved solution replaces the offspring with probability $\probLB$. In practice, most work on this topic has favoured either a pure Lamarckian approach or LB evolution~\cite{eiben2015}. Empirical studies have highlighted a trade-off between Lamarckian and Baldwinian evolution: the former tends to converge faster, while the latter is more reliable on harder instances~\cite{whitley1994,castillo2006}.

Despite their long history, Lamarckian and Baldwinian evolution have not recently been empirically evaluated on state-of-the-art benchmarks at scale. Moreover, to our knowledge, there are no runtime analyses of Baldwinian EAs and only sporadic ones of Lamarckian EAs under the name of memetic algorithms. 

In this work, we take a fresh look at all four evolution types using both large-scale empirical evaluation and modern theoretical tools. On the empirical side, we conduct a comprehensive suite of experiments on the GraphBench benchmark~\cite{stoll2026graphbench}, covering small and large graphs from three random graph models on the Maximum Independent Set and Maximum Cut problems. Our results show that Baldwinian evolution outperforms Lamarckian evolution on the Maximum Independent Set problem, while both substantially outperform deep learning baselines and match or approach the performance of highly specialised solvers. In our experiments, Darwinian evolution is consistently the weakest, demonstrating the great potential of local search augmentation in practice. We also report a robust set of generalist parameters that incurs an average solution quality loss below $1\%$ compared to problem-specific tuning for all local-search-based evolution types, providing a set of reference values for practitioners. The complete source code reproducing our experiments is publicly available\footnote{\url{https://github.com/Hypatia-II/polerina}}. 

On the theoretical side, we extend the \deceptiveleadingblocks (\DLB) benchmark~\cite{LehreN19foga} from block length $k=2$ to arbitrary $k \ge 2$ and prove asymptotic upper and lower bounds on the runtime of Darwinian, Lamarckian, and Baldwinian variants of the \oea. In terms of iterations, Baldwinian evolution is asymptotically faster than Lamarckian, which is in turn asymptotically faster than Darwinian evolution for all $k \ge 2$. When accounting for the cost of the local search procedure, the precise ordering depends on the implementation, but Baldwinian evolution outperforms the others from a small value of $k$ onwards, aligning with the empirical advantage of Baldwinian evolution on hard instances.

\section{Related Work}
\label{sec:relatedwork}

We now provide an overview of the existing literature on Baldwinian, Lamarckian and LB evolution. Hinton and Nowlan \cite{hinton1987} showed in the context of Neural networks that Baldwinian evolution modifies the search space by smoothing the fitness landscape, thereby creating a basin of attraction around the global optimum. Whitley, Gordon and Mathias~\cite{whitley1994} also found that Baldwinian evolution allows for accelerated convergence towards a global optimum. 

Comparative studies showed that while Lamarckian evolution tends to converge faster on easier problems, it is more susceptible to getting trapped in local optima. Baldwinian evolution, on the other hand, is often slower but more reliably converges to the global optimum for difficult problems \cite{whitley1994,castillo2006}. Lamarckian evolution has also been applied in other domains, such as evolutionary robotics \cite{LuoMTE23,luo2025}. A detailed discussion of the encoding requirements for Lamarckian evolution is provided in Appendix~\ref{app:lamarck}.

Houck et al. \cite{houck1997} studied LB evolution on bounded nonlinear and combinatorial optimisation problems, arguing it provides the most robust performance across diverse problem sets. They find, however, that $\probLB$ is problem-specific.

Given the little understanding of the relation of Darwinian, Lamarckian, and Baldwinian evolution in evolutionary computation in general, it is not surprising that there are very few theoretical works on this topic. The only work regarding the Baldwin effect which we are aware of is the seminal paper by Whitley, Gordon, and Mathias~\cite{whitley1994}. It defines a moderately deceptive function on bit-strings of length four and studies how a simple evolutionary algorithm in the infinite population model copes with this problem. The mathematical study is restricted to the analysis of how the fitness landscape changes, which determines the change of the frequencies of the 16 individuals in the population in one iteration. How this affects the global behaviour of the algorithm is then numerically evaluated. The landscape analysis exhibits several interesting properties of Lamarckian and Baldwinian evolution which still shape today's understanding of these two types of evolution. With a problem size of four and the infinite population model, however, the results are hard to transfer into the asymptotic runtime analysis perspective, which dominates the theoretical analysis of EAs today. We note that in \cite{whitley1994} in both the Lamarckian and the Baldwinian EA the local search procedure performs only one hillclimbing step. 

While the term Lamarckian evolution is rarely used in these works, most memetic algorithms in fact use Lamarckian evolution, that is, they apply a local search improvement heuristic to each solution generated. We refer to the latest of these works \cite{NguyenS20} or the survey \cite{Sudholt11bookchapter} for a detailed discussion. Overall, these works show that local search can often speed up evolutionary algorithms, however, a complete picture of when and how memetic algorithms profit from local search has not yet been obtained. In \cite{NguyenS20}, local search helps greatly in leaving local optima, but the hurdle benchmark studied there has the particularly suitable property that each local optimum has a Hamming neighbour from which local search can lead to a better local optimum. When this property is absent, memetic algorithms using local search can face enormous difficulties, as shown in~\cite{Sudholt11}.

\section{Empirical Analysis}
We now present an empirical comparison of the four evolution types: Darwinian, Baldwinian, Lamarckian, and LB across two combinatorial optimisation problems on graphs. We introduce the two studied problems and their fitness functions in Section~\ref{sec:problems}, we then define the evolutionary algorithms and the local search used in Section~\ref{sec:ga}. Subsequently, we report our experimental setup in Section~\ref{sec:protocol} and provide our main empirical results in Section~\ref{sec:results}.

\subsection{Problem Formulations}
\label{sec:problems}

We study two discrete combinatorial optimisation problems defined on graphs: Maximum Independent Set (MIS) and the Maximum Cut (MC). We focus on their unweighted versions where all edges are assigned unit weights. Let $G = (V, E)$ be an undirected graph, where $V = \{v_1, \dots, v_n\}$ is the set of vertices and $E \subseteq \{ \{v_i, v_j\} : v_i, v_j \in V\}$ is the set of edges. Let $A$ be the adjacency matrix of $G$, where $A_{ij} = 1$ if $\{v_i, v_j\} \in E$ and $0$ otherwise. A candidate solution is represented by a binary vector $x \in \{0, 1\}^n$, where $x_i = 1$ if vertex $v_i$ is selected and $x_i = 0$ otherwise. The MIS problem seeks the largest subset of vertices such that no two vertices in the subset are connected by an edge. The MC problem seeks a partition of the vertex set that maximises the number of edges between the two parts. Formal problem formulations are given in Appendix~\ref{app:problems}. We now proceed to define problem specific fitness functions.

For the MIS problem, we allow non-independent sets in the population and use a penalised fitness function to handle infeasible individuals. This relaxation enables the EAs to explore infeasible regions of the search space, potentially facilitating the crossing of fitness valleys. Infeasible individuals are penalised proportionally to the number of independence violations they incur, steering the search towards feasible solutions. To determine the total number of violations, we evaluate the number of internal edges within the subgraph induced by $x$, which is computed as $\Eint(x) = \frac{1}{2} x^\top A x$. A set is independent if and only if $\Eint(x) = 0$. The EA fitness $\FMIS(x)$ is hence defined as
\begin{equation}
    \FMIS(x) =
    \begin{cases}
      \sum_{i=1}^\nn x_i, & \text{if } \Eint(x) = 0; \\
      - \Eint(x), & \text{otherwise}.
    \end{cases}
\end{equation}

For the MC problem, since every binary vector $x \in \{0,1\}^\nn$ defines a valid partition of $V$, there are no constraints to penalise. The EA fitness $\FMC(x)$ is therefore simply formulated as
\begin{equation}
\FMC(x) = (1 - x)^\top A x.
\end{equation}

\subsection{Evolutionary Algorithms and Local Search}
\label{sec:ga}
To tackle the MIS and MC problems, we work with evolutionary algorithms with different evolution types. The core of our implementation is a $(\mu + \lambda)$ Evolutionary Algorithm which maintains a population $P$ of fixed size $\mu$, initialised by sampling from $\{0,1\}^n$. For every generation, the $\lambda$ offspring $y_i$ are generated independently: with probability $\rc$, offspring $y_i$ results from a uniform crossover between two randomly selected parents, followed by mutation; otherwise, with probability $1 - \rc$, offspring results from the mutation of a single randomly sampled parent. The mutation is a standard Bernoulli mutation where each bit $x_i$ is independently flipped with probability $\rmut = 1/\nn$, yielding an expected number of one bit-flip per offspring. For Baldwinian, Lamarckian and LB evolutions, the offspring undergo a local search. In our local search, we employ a steepest ascent and we search until local optimality \cite{eiben2015}. The fitness of the local optimum found is used for evaluation. For Lamarckian evolution, the improved offspring always replaces the initial one, for LB evolution the replacement occurs with probability $\probLB$, and in Baldwinian evolution, the initial offspring is preserved. For Darwinian evolution, the offspring are evaluated directly after mutation. In all EAs, the offspring $y_i$ are evaluated once and their fitness is stored in $f_{y_i}$. Finally, the $\mu$ fittest individuals from the combined pool of parents and offspring are retained for the next generation. The pseudo-code is given in Algorithm~\ref{alg:ga_hybrid}.
\begin{algorithm2e}[t]                                          
   Initialise population $P$ with $\mu$ individuals randomly sampled from $\{0,1\}^n$\;
  \Repeat{forever}{                                                                                                  
      \For{$i \assign 1$ \KwTo $\lambda$}{
          Randomly sample $x_1, x_2$ from $P$\;
          \KwWP $\rc$: $y_i \assign \textsc{Mutate}(\textsc{Crossover}(x_1, x_2))$; \KwElse $y_i \assign \textsc{Mutate}(x_1)$\;
          \lIf{Darwinian}{$f_{y_i} \assign f(y_i)$} \lElse{$y_{i,\mathrm{LS}} \assign \textsc{LocalSearch}(y_i)$}
          \lIf{Baldwinian}{$f_{y_i} \assign f(y_{i,\mathrm{LS}})$}
          \lIf{Lamarckian}{$f_{y_i} \assign f(y_{i,\mathrm{LS}})$;\enspace $y_i \assign y_{i,\mathrm{LS}}$}
          \lIf{LB}{$f_{y_i} \assign f(y_{i,\mathrm{LS}})$;\enspace \KwWP $\probLB$: $y_i \assign y_{i,\mathrm{LS}}$}      }                  
      $P \gets \text{the } \mu \text{ fittest from } P \cup \{y_1, \ldots, y_\lambda\}$\;
  }     
  \caption{$(\mu + \lambda)$ EA with Darwinian, Baldwinian, Lamarckian, or LB evolution to maximise the fitness function $f : \{0,1\}^n \to \R$.}
  \label{alg:ga_hybrid}                                                                                              
  \end{algorithm2e}

\textbf{MIS Local Search.} For the MIS problem, local search acts as a repair mechanism. For a given node $v_i $, we define the independence violation count as the number of its neighbours in the selected set, denoted by $c_i(x) = x_i \sum_{j=1}^n A_{ij} x_j$. The repair strategy iteratively selects the node $v_i$ with the highest $c_i(x)$ and removes it from the set ($x_i \leftarrow 0$). If multiple vertices share the same $c_i(x)$, a random tie-breaking rule is applied. This process repeats until $\Eint(x) = 0$.

\textbf{MC Local Search.} For the Maximum Cut problem, the local search iteratively flips nodes from one partition set to the other. In each step, the vertex that yields the maximum positive gain in cut size is identified. To ensure computational efficiency, we maintain an active list of nodes with positive gains, avoiding a full scan of all nodes at each iteration. When a node is flipped, the gains of its neighbours are updated incrementally and the active list is dynamically adjusted. Random tie-breaking is also used when multiple flips yield the same maximum gain. We stop when no additional gains are possible.

\textbf{Difference Between Repair and Local Search.} The two local search procedures above differ in nature. For MC, the local search optimises the objective (cut size) until no improving flip exists, guaranteeing a local optimum. For MIS, the procedure is a repair mechanism: it only removes vertices to restore feasibility and never attempts to augment the independent set, yielding a feasible but not necessarily locally optimal solution. One could extend this work, as suggested by a reviewer, by following the repair step with a proper local search step for the MIS problem.

\subsection{Experimental Setup}
\label{sec:protocol}
We evaluate the four evolution types on six graph datasets from the GraphBench suite.  We now detail the benchmark and experiments. Implementation details can be found in Appendix~\ref{app:implementation}.

\textbf{Benchmark.} We conducted experiments using the GraphBench benchmarking suite \cite{stoll2026graphbench}. Specifically, we focus on the ``Combinatorial Optimization'' domain, which provides standardised datasets for the MIS and MC problems. GraphBench provides three random graph models: Erdős-Rényi (ER), Barabási-Albert (BA), and RB-Graphs with two size scales (small and large). Small-scale graphs contain $200$-$300$ nodes and large-scale graphs $700$-$1200$ nodes. Full dataset characteristics are available in \cite{stoll2026graphbench}. GraphBench benchmarks supervised and unsupervised learning approaches. Our EAs fall into the unsupervised category, i.e., no ground-truth labels are required to run EAs. As an upper-bound reference, GraphBench provides solver-generated solutions (see Appendix \ref{app:benchmark}). For MIS, solutions are produced by Karlsruhe Maximum Independent Sets (KaMIS) \cite{lamm2019}, a highly specialised heuristic solver. For MC, solutions are obtained by formulating the problem as an integer programming problem and solving it with Gurobi Optimisation, LLC \cite{gurobi2024} with a timeout of 3600 seconds.

\textbf{Experiments.} Since the EAs employ an elitist $(\mu + \lambda)$ selection strategy, the best found solution is always retained in the population. Hence, we report the maximum fitness score (independent set size for MIS or cut set size for MC)  found in the final population obtained with a fixed computational budget of $40{,}000$ fitness evaluations per experiment. 

The EA parameter search space is provided in Table~\ref{tab:parameter_grid} (Appendix~\ref{app:experimental_details}). All four evolution types share the same grid over population size $\mu$, number of offspring $\lambda$, crossover rate $\rc$, initialisation type, and mutation type. The Lamarckian probability $\probLB$ is additionally tuned for LB evolution. This results in a total of $252$ parameter combinations tested. The grid search is conducted on the training split of the $3$ graph kinds (ER, BA, and RB-Graphs), in the two sizes (small and large). For the rest of the paper, and in accordance with GraphBench, we refer to the association of a graph kind and a size as a dataset. To find the optimal parameters, we sample $0.2\%$ of each of the six datasets, yielding $70$ graphs per dataset and $420$ graphs in total. The $252$ parameter combinations are evaluated on all $420$ graphs with one repetition per graph, resulting in $252 \times 420 = 105{,}840$ EA runs per problem. For each (evolution type, dataset) pair, the selected configuration is the one that maximises the mean objective value over the $70$ sampled graphs. This procedure is performed for MIS and MC.

Once the optimal parameters are found for each dataset (6) and evolution type (4), we evaluate the corresponding EAs on the full test set of $7{,}500$ graphs using three independent random seeds, for a total of $7{,}500 \times 6 \times 4 \times 3 = 540{,}000$ EA runs per problem. We report the mean objective value and its mean standard deviation. For each graph the standard deviation is computed across the three seeds; the reported standard deviation is the mean of the $7{,}500$ per-graph values.

\subsection{Results and Analysis}
\label{sec:results}

In this section, we report the results of our experiments and compare them against GraphBench baselines. We analyse solution quality via mean score, i.e., independent set size or cut set size, then examine convergence trajectories and diversity of both optimal solutions found and population over the runs. We also report runtime and present a single cross-problem parameter configuration.

\begin{table}[t]
\centering
\caption{Mean score ($\uparrow$, $\pm$ std) for EA evolution vs.\ GraphBench baselines across datasets for MIS and MC. Underlined bold typesets the best EA and best baseline. Bold typesets the values within standard deviation of the best values.}

\label{tab:performance}
\resizebox{\textwidth}{!}{%
\begin{tabular}{llrrrrrr}
\toprule
\textbf{Problem} & \textbf{Type} & \textbf{ER Small} & \textbf{ER Large} & \textbf{BA Small} & \textbf{BA Large} & \textbf{RB Small} & \textbf{RB Large} \\
\midrule
  \multirow{9}{*}{MIS} & GIN & $25.42$ {\scriptsize $\pm$ $0.41$} & $26.28$ {\scriptsize $\pm$ $0.41$} & $100.16$ {\scriptsize $\pm$ $3.67$} & $135.00$ {\scriptsize $\pm$ $0.72$} & $17.29$ {\scriptsize $\pm$ $0.33$} & $14.00$ {\scriptsize $\pm$ $0.32$} \\
   & GT & $22.98$ {\scriptsize $\pm$ $0.47$} & $24.98$ {\scriptsize $\pm$ $0.29$} & $99.58$ {\scriptsize $\pm$ $6.45$} & $114.26$ {\scriptsize $\pm$ $0.60$} & $16.54$ {\scriptsize $\pm$ $0.48$} & $13.41$ {\scriptsize $\pm$ $0.14$} \\
   & MLP & $23.18$ {\scriptsize $\pm$ $0.02$} & $24.26$ {\scriptsize $\pm$ $0.45$} & $95.11$ {\scriptsize $\pm$ $2.04$} & $114.49$ {\scriptsize $\pm$ $0.76$} & $16.11$ {\scriptsize $\pm$ $0.10$} & $13.04$ {\scriptsize $\pm$ $0.21$} \\
   & DeepSets & $23.05$ {\scriptsize $\pm$ $0.06$} & $24.22$ {\scriptsize $\pm$ $0.06$} & $95.08$ {\scriptsize $\pm$ $0.17$} & $114.89$ {\scriptsize $\pm$ $0.02$} & $16.02$ {\scriptsize $\pm$ $0.03$} & $13.18$ {\scriptsize $\pm$ $0.04$} \\
   & Solver & \best{33.60}{1.43} & \best{45.64}{0.63} & \best{142.86}{16.54} & \best{433.77}{19.17} & \best{20.80}{1.82} & \best{42.55}{4.45} \\
\cmidrule{2-8}
   & Darwinian & $26.47$ {\scriptsize $\pm$ $1.44$} & $30.73$ {\scriptsize $\pm$ $1.54$} & $140.18$ {\scriptsize $\pm$ $1.34$} & $405.72$ {\scriptsize $\pm$ $4.69$} & $16.96$ {\scriptsize $\pm$ $0.75$} & $31.20$ {\scriptsize $\pm$ $1.23$} \\
   & Baldwinian & \best{33.00}{0.30} & \best{42.06}{0.90} & \best{143.42}{0.03} & \best{433.63}{0.32} & \boldscore{20.04}{0.21} & \boldscore{36.89}{0.52} \\
   & Lamarckian & $30.37$ {\scriptsize $\pm$ $0.94$} & $37.79$ {\scriptsize $\pm$ $1.06$} & \boldscore{143.12}{0.33} & $429.02$ {\scriptsize $\pm$ $2.03$} & \boldscore{19.45}{0.47} & \boldscore{37.14}{0.88} \\
   & LB & \boldscore{32.79}{0.40} & \boldscore{40.66}{1.72} & \boldscore{143.38}{0.06} & \boldscore{433.53}{0.36} & \best{20.10}{0.21} & \best{37.61}{0.88} \\
\midrule
  \multirow{9}{*}{MC} & GIN & $2327.9$ {\scriptsize $\pm$ $24.8$} & $20878.0$ {\scriptsize $\pm$ $107.9$} & $397.0$ {\scriptsize $\pm$ $0.6$} & $1044.1$ {\scriptsize $\pm$ $0.6$} & $2106.7$ {\scriptsize $\pm$ $14.6$} & $24748.0$ {\scriptsize $\pm$ $87.8$} \\
   & GT & $2172.7$ {\scriptsize $\pm$ $91.8$} & $16534.0$ {\scriptsize $\pm$ $278.0$} & $363.8$ {\scriptsize $\pm$ $0.6$} & $986.9$ {\scriptsize $\pm$ $3.1$} & $1925.7$ {\scriptsize $\pm$ $32.8$} & $21524.0$ {\scriptsize $\pm$ $184.0$} \\
   & MLP & $1866.7$ {\scriptsize $\pm$ $67.6$} & $7335.4$ {\scriptsize $\pm$ $57.5$} & $308.7$ {\scriptsize $\pm$ $0.2$} & $929.2$ {\scriptsize $\pm$ $4.1$} & $1727.7$ {\scriptsize $\pm$ $165.1$} & $20357.0$ {\scriptsize $\pm$ $249.6$} \\
   & DeepSets & $33.6$ {\scriptsize $\pm$ $20.8$} & $27.7$ {\scriptsize $\pm$ $6.8$} & $1.1$ {\scriptsize $\pm$ $0.8$} & $154.3$ {\scriptsize $\pm$ $151.5$} & $140.0$ {\scriptsize $\pm$ $155.5$} & $3575.9$ {\scriptsize $\pm$ $730.0$} \\
   & Solver & \bestsolver{2835.5}{00.0} & \bestsolver{23884.0}{00.0} & \bestsolver{460.9}{00.0} & \bestsolver{1260.4}{00.0} & \bestsolver{2920.1}{00.0} & \bestsolver{33914.0}{00.0} \\
\cmidrule{2-8}
   & Darwinian & $2861.5$ {\scriptsize $\pm$ $14.1$} & $23548.9$ {\scriptsize $\pm$ $44.3$} & $398.4$ {\scriptsize $\pm$ $4.5$} & $1186.1$ {\scriptsize $\pm$ $8.8$} & $2805.9$ {\scriptsize $\pm$ $12.5$} & $31622.6$ {\scriptsize $\pm$ $297.6$} \\
   & Baldwinian & \boldscore{2910.1}{0.7} & \boldscore{23857.0}{9.1} & \boldscore{412.6}{0.6} & \best{1248.1}{2.7} & \boldscore{2822.1}{0.1} & \best{32001.3}{0.1} \\
   & Lamarckian & \best{2910.4}{0.5} & \best{23862.1}{5.8} & \best{412.7}{0.5} & \boldscore{1247.8}{2.6} & \best{2822.2}{0.0} & \best{32001.3}{0.1} \\
   & LB & \best{2910.4}{0.5} & \boldscore{23861.6}{6.1} & \best{412.7}{0.5} & \boldscore{1247.6}{2.7} & \boldscore{2822.1}{0.1} & \best{32001.3}{0.1} \\
\bottomrule
\end{tabular}}
\end{table}

\textbf{Solution Quality.} Following the grid search protocol described in Section~\ref{sec:protocol}, we identified the optimal EA parameters (provided in Appendix~\ref{app:experimental_details} Table~\ref{tab:evolution_params_combined}) and report the corresponding mean scores for MIS and MC in Table~\ref{tab:performance}.

Across all datasets and both problems, every EA evolution type substantially outperforms the four deep learning baselines (GIN, GT, MLP, DeepSets). The following discussion therefore focuses on the comparison with the solvers.

In Table~\ref{tab:performance}, we observe that in the great majority of cases, the best-performing $\probLB \in \{0.15, 0.3, 0.5, 0.9\}$ for LB tracks whichever evolution type dominates: $0.15$ on MIS and $0.9$ on MC, pulling LB towards the stronger single evolution type (Appendix~\ref{app:experimental_details} Table~\ref{tab:evolution_params_combined}). Hence, in our experiment, LB offers no systematic advantage: its best configuration simply converges to the highest performing evolution, and can be discarded in favour of the simpler single-mode alternatives.

Darwinian evolution is the weakest EA across all datasets, with both lower mean scores and higher standard deviations. On MIS, the gap to the best repair-based EA ranges from $2\%$ on BA Small to $27\%$ on ER Large, suggesting that without local search, evolutionary operators alone are less effective at navigating the highly constrained MIS feasibility landscape. On MC the gap is smaller (within $5\%$ of the best EA on all datasets).

For MIS, Baldwinian evolution achieves the strongest performance, within standard deviation, on all datasets. It also nearly matches the specialised KaMIS solver: reaching $98.21\%$ of the KaMIS score on ER Small, $93.18\%$ on ER Large, $99.98\%$ on BA Small, $99.92\%$ on BA Large. For MC, Lamarckian and Baldwinian methods reach similar scores with differences within standard deviation. Taken together, Baldwinian evolution is the most consistent choice: it achieves near-optimal MIS performance, matches the best evolution type on MC, and does so with the lowest seed sensitivity of all EA types.

Surprisingly for MC, all four EAs outperform the solver on ER Small. Also, on ER Large, the best EA reaches $99.9\%$ of the solver's score and $99.0\%$ on BA Large, showing that local-search-augmented EAs can be competitive in practice while operating at a fraction of the computational cost (see Table~\ref{tab:runtime} in Appendix~\ref{app:runtime}). 

Beyond the results shared in Table~\ref{tab:performance}, we recorded nine distinct graphs in ER-Large for which EAs found independent sets that are larger than those detected by KaMIS: six using Baldwinian evolution and three using LB evolution ($\probLB = 0.15$). This is noteworthy given that KaMIS is a solver specifically engineered for MIS, combining kernelisation, local search, and branch-and-reduce, whereas our EAs are general-purpose methods with a simple repair mechanism and no problem-specific design. The fact that such a generalist approach can match and occasionally exceed a dedicated specialist solver exemplifies the effectiveness of generalist evolutionary search for combinatorial optimisation.

\textbf{Convergence.} Figure~\ref{fig:convergence} in Appendix~\ref{app:convergence} shows the mean fitness against the number of fitness evaluations. On MIS, Baldwinian evolution converges faster than LB, with both reaching comparable optima (Table~\ref{tab:performance}, with RB Large as the sole exception). On MC, all three local search EAs reach near-identical final scores and their convergence curves are largely indistinguishable, consistent with the near-tied values in Table~\ref{tab:performance}; Baldwinian evolution converges marginally slower on some instances. Across both problems, Darwinian evolution requires more fitness evaluations to converge and consistently plateaus at a lower optimum. Overall, local search mechanisms consistently accelerate convergence, with Baldwinian evolution offering the best speed-quality trade-off.

\textbf{Solution and Population Diversity.} We report the diversity of solutions produced by each evolution type in Table~\ref{tab:unique_solutions} in Appendix~\ref{app:diversity_optim_sol}, where we provide the mean number of unique optimal solutions found during a run. Note that for Baldwinian evolution, unique solutions are tracked from the locally-searched offspring before they are discarded, ensuring a fair comparison across evolution modes. Darwinian evolution consistently finds very few unique solutions, confirming that local search not only improves mean score but also broadens solution diversity. Figure~\ref{fig:diversity} in Appendix~\ref{app:population_diversity} shows population diversity, measured as mean pairwise Hamming distance, over fitness evaluations. Baldwinian evolution consistently maintains higher population diversity than Lamarckian evolution throughout the runs. For MIS, Baldwinian evolution sustains high population diversity throughout and simultaneously achieves the highest mean scores and the highest solution diversity (Tables~\ref{tab:performance} and ~\ref{tab:unique_solutions}). However, diversity alone does not guarantee performance: Darwinian evolution maintains higher diversity than Lamarckian yet reaches lower optima, and LB sustains high diversity on ER Large without achieving the highest mean score. For MC, all local-search-based variants achieve similar performance (Table~\ref{tab:performance}), yet Lamarckian and LB find substantially more unique solutions (Table~\ref{tab:unique_solutions}) despite very low diversity.

\textbf{Computational Cost.} Table~\ref{tab:runtime} in Appendix~\ref{app:runtime} reports the mean execution time (in seconds). These results have been obtained by running on a sample of $10$ graphs, $3$ seeds. Since all EAs share the same $40{,}000$ fitness evaluation budget, runtime differences reflect the per-evaluation overhead of each strategy. As expected, Darwinian evolution is the fastest, benefiting from the absence of local search overhead. Conversely, Baldwinian evolution is the most computationally expensive, while keeping to a computational cost that is realistic in practice. Lamarckian evolution significantly mitigates the local search cost by keeping locally optimised individuals in subsequent generations.

Yet across all EA variants, absolute runtimes remain negligible compared to the MC Gurobi solver, which runs with a $3600$-second timeout per graph. Even Baldwinian evolution, the most expensive EA type, finishes in under $18$ seconds on RB Large, yielding speedups of $200\times$-$2{,}000\times$ over Gurobi across datasets. 

\begin{table}[t]
\centering

\caption{Parameter universality cost (\%, $\downarrow$): test-set relative loss from a single configuration across datasets and problems instead of per (dataset, problem) tuning. Bold typesets the most robust EA. Cross-problem configurations ($\mu, \lambda, \rc$): Darwinian ($50, 250, 1.0$); Baldwinian ($250, 50, 0.9$); Lamarckian ($10, 10, 0.0$); LB ($250, 250, 1.0$) ($p_{\text{LB}}=0.15$).}
\label{tab:normalized_agg_loss}
\resizebox{\textwidth}{!}{%
\setlength{\tabcolsep}{5pt}%
\begin{tabular}{llrrrrrrr}
\toprule
\textbf{Problem} & \textbf{Evol. Type} & \textbf{ER Small} & \textbf{ER Large} & \textbf{BA Small} & \textbf{BA Large} & \textbf{RB Small} & \textbf{RB Large} & \textbf{Avg.} \\
\midrule
   \multirow{4}{*}{MIS} & Darwinian & $4.98$ & $0.00$ & $2.74$ & $0.00$ & $2.59$ & $\mathbf{-0.11}$ & $1.70$ \\
    & Baldwinian & $0.22$ & $0.00$ & $0.00$ & $0.00$ & $\mathbf{0.00}$ & $0.09$ & $0.05$ \\
    & Lamarckian & $0.47$ & $\mathbf{-0.18}$ & $0.00$ & $\mathbf{-0.06}$ & $\mathbf{0.00}$ & $0.00$ & $0.04$ \\
    & LB & $\mathbf{0.00}$ & $-0.13$ & $\mathbf{-0.01}$ & $0.00$ & $0.10$ & $0.18$ & $\mathbf{0.02}$ \\
\midrule
   \multirow{4}{*}{MC} & Darwinian & $0.29$ & $0.53$ & $0.84$ & $1.65$ & $0.10$ & $0.89$ & $0.72$ \\
    & Baldwinian & $\mathbf{0.00}$ & $\mathbf{0.01}$ & $\mathbf{0.03}$ & $\mathbf{0.11}$ & $\mathbf{0.00}$ & $0.00$ & $\mathbf{0.02}$ \\
    & Lamarckian & $0.81$ & $0.68$ & $1.34$ & $1.85$ & $0.05$ & $0.04$ & $0.80$ \\
    & LB & $0.02$ & $0.05$ & $0.10$ & $0.29$ & $0.00$ & $\mathbf{0.00}$ & $0.08$ \\
\bottomrule
\end{tabular} }
\end{table}

\textbf{Cross-Problem Parameter Robustness.} Running a separate parameter search per problem and dataset is costly. We therefore quantify the loss incurred by using a single configuration per evolution type across problems. For each evolution type, we identify the cross-problem best configuration and evaluate it on the held-out test set against the per-(dataset, problem) best configuration; more details in Appendix~\ref{app:universality}. The relative loss incurred by using the cross-problem configuration is reported in Table~\ref{tab:normalized_agg_loss}. The overall losses are small, confirming that cross-problem parameter transfer is viable for all evolution types, and the negative values indicate that the cross-problem configuration occasionally generalises better than per-pair tuning on unseen graphs. Baldwinian evolution is the most robust, with an average loss of $0.05\%$ on MIS and $0.02\%$ on MC and a maximum of $0.22\%$. LB evolution is comparable, achieving the lowest average loss on MIS ($0.02\%$). Lamarckian evolution incurs a low average loss on MIS ($0.04\%$) but reaches $0.80\%$ on MC, mostly from the BA datasets. Darwinian evolution is the most sensitive, reaching an average loss of $1.70\%$ on MIS and $0.72\%$ on MC.

\section{Theoretical Analysis}

We complement our experimental findings with a mathematical runtime analysis on a natural generalisation of the established \deceptiveleadingblocks benchmark introduced by Lehre and Nguyen~\cite{LehreN19foga}. Our results will align with our experimental findings, but more importantly, with the clarity of the mathematical analysis, they will also explain why Baldwinian evolution can be superior to Lamarckian and Darwinian evolution. 

\textbf{The \deceptiveleadingblocks Benchmark.}
We regard the following benchmark problem, which extends the \deceptiveleadingblocks benchmark proposed by Lehre and Nguyen~\cite{LehreN19foga} in a natural fashion, namely from block length $k=2$ to arbitrary block lengths~$k \ge 2$. Let $k \in \N_{\ge 2}$ and $n$ be a multiple of $k$. We view the set $[1..n],$ denoting the integers one to $n,$ of bit-positions as partitioned, in a left-to-right fashion, into blocks of length~$k$. The objective value (fitness) of a bit-string $x$ is strongly influenced by the number of blocks, contiguously counted from left to right, that contain only ones. The subsequent block still contributes to the fitness, but to a lesser degree and in a deceptive manner, that is, the contribution is larger when the block contains fewer ones. Consequently, bit-strings composed of a sequence of all-ones blocks followed by an all-zero block are true local optima.

More formally, for a fixed value of $k$ mostly suppressed in the following notation, and for all $\ell \in [1..n/k]$ we call $B_\ell = [(\ell-1)k+1..\ell k]$ the \emph{$\ell$-th block}. We write $x_{B_\ell} = (x_{(\ell-1)k+1}, \dots, x_{\ell k}) \in \{0,1\}^k$ to denote the \emph{$\ell$-th block} of $x \in \{0,1\}^n$. For $x \neq (1, \dots, 1)$, we let $c(x) := \min\{\ell \in [1..n/k] \mid x_{B_\ell} \neq (1, \dots, 1)\}$ denote the first block that is not all-ones and call $x_{B_{c(x)}}$ the \emph{critical block of $x$}. For $x = (1, \dots, 1)$, we artificially define $c(x)=n/k+1$. With these definitions, the \deceptiveleadingblocks function with block length~$k$ is defined by
\[
\DLB(x) := \DLB_k(x) := k (c(x)-1) + (k - 1 - \|x_{B_{c(x)}}\|_1)
\]
for all $x \in \{0,1\}^n$. We see that the $c(x)-1$ leading all-ones blocks each contribute $k$ to the fitness, whereas the critical block in a deceptive manner contributes $k - 1 -\|x_{B_{c(x)}}\|_1 \in [0..k-1]$.

As said above, the \deceptiveleadingblocks was introduced in~\cite{LehreN19foga} for block length $k=2$ and has received a considerable amount of attention by the theory community since then. Probably, it is the third most prominent multi-modal benchmark in the runtime analysis community, behind the \jump and \cliff benchmarks. It differs from these in that a typical EA runs into local optima more frequently than just once.

Lehre and Nguyen~\cite{LehreN19foga} used the \DLB benchmark to demonstrate that estimation-of-distribution algorithms (EDAs) can have enormous difficulties with local optima, shortly after \cite{Doerr19gecco,Doerr21cgajump} had shown that they cope surprisingly well with the local optimum of the \jump and \cliff benchmark. Specifically, \cite{LehreN19foga} proved that the UMDA algorithm with parameter settings $\mu = \Theta(\lambda)$ and $\lambda = o(n)$ has an expected runtime of at least $e^{\Omega(\lambda)}$ on DLB, in contrast to the $O(n^3)$ runtime guarantees they showed for many standard evolutionary algorithms. This clear message was  questioned in~\cite{DoerrK21ecj}, where it was shown that the UMDA with parameters avoiding genetic drift~\cite{DoerrZ20tec} can optimise DLB in $O(n^2 \log n)$ function evaluations with high probability. Subsequently, a $\Theta(n^2)$ expected runtime was shown for the Metropolis algorithm~\cite{WangZD24} and an $O(n \log n)$ runtime with high probability for the significance-based compact genetic algorithm from~\cite{DoerrK20tec}.

\textbf{Mathematical Runtime Analysis.}
Our main results are the following, asymptotically tight runtime guarantees.
\begin{theorem}\label{thm:main}
Let $k \in \N_{\ge 2}$. Then the expected runtimes, measured in iterations, on the \deceptiveleadingblocks benchmark, of the \oea with Darwinian, Lamarckian, and Baldwinian evolution satisfy the following estimates.
\begin{itemize}
\item Darwin: $\Theta(n^{k+1})$.
\item Lamarck: $\Theta(n^{k})$.
\item Baldwin: $\Theta(n^2)$.
\end{itemize}
Here the asymptotics are for growing problem size $n$ and constant $k$.
\end{theorem}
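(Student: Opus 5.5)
Throughout, I fix the theoretical setting that matches Algorithm~\ref{alg:ga_hybrid} with $\mu=\lambda=1$ and $\rc=0$. The local search is steepest-ascent single-bit-flip hill climbing on $\DLB$, run to a local optimum. The first step of the proof is a structural lemma on this local search. Let $y$ have critical block $B_c$ containing $j$ ones. If $j\le k-2$, every improving single flip lies in $B_c$ and turns a one into a zero, so the local search ends at the point where $B_c$ is all-zero and the prefix is unchanged. If $j=k-1$, flipping the missing bit raises the fitness by at least $k$, whereas flipping a one to zero raises it by only $1$. Steepest ascent therefore completes the block and recurses on the next block. Consequently $\mathrm{LS}(y)$ is always of the form (all-ones blocks, then an all-zero critical block). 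I would then prove the three bounds separately, each via a fitness-level argument over the index $c$ of the critical block. For the lower bounds I would use the standard argument that after initialisation a linear number of blocks remain, with high probability. Per completed block, the expected number of free-rider blocks (later blocks that are already all-ones) is $O(2^{-k})=O(1)$, because bits right of the critical block are uniformly random, as in the \LO analysis.

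\emph{Darwin.} Once the critical block is all-zero, the parent sits at fitness $k(c-1)+k-1$. Any offspring that keeps the prefix but leaves $B_c$ not all-ones has strictly smaller fitness, and so does any offspring that breaks the prefix. Hence progress requires flipping exactly the $k$ bits of $B_c$ (plus anything harmless), which has probability $\Theta(n^{-k})$ per iteration. I would also show that after each completion the new critical block is driven to all-zeros in $O(n)$ expected time: the deceptive part is a OneMax on $k$ bits, and it reaches all-zeros before hitting all-ones with probability $1-O(1/n)$. Summing $\Theta(n^k)$ over $\Theta(n/k)$ blocks, and noting that a constant fraction of blocks must be completed via such a jump, gives $\Theta(n^{k+1})$.

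\emph{Lamarck.} By the structural lemma, the parent is always a local optimum with an all-zero critical block. An offspring makes progress iff its block $B_c$ contains at least $k-1$ ones and its prefix is intact. This happens with probability $\Theta(n^{-(k-1)})$, since $k$ bits can be chosen in $k$ ways for the $(k-1)$-flip case. Otherwise the local search returns a point of equal or lower fitness, so no progress is made. The same block-summation then gives $\Theta(n\cdot n^{k-1})=\Theta(n^k)$.

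\emph{Baldwin.} This is the interesting case and the main obstacle. The effective fitness is $g(y)=\DLB(\mathrm{LS}(y))$. By the lemma, all $y$ with a fixed intact prefix and $j\le k-2$ ones in $B_c$ share the same $g$-value $k(c-1)+k-1$. So the parent performs a plateau random walk on the $k$ bits of $B_c$: equal offspring are accepted, and flips of other bits are neutral or rejected. Since the state space is of constant size, a single-bit flip moving $j$ from $k-2$ to $k-1$ occurs within $O(n)$ expected iterations, and then $g$ strictly increases. This yields an $O(n)$ bound per block and $O(n^2)$ overall. The delicate point is that the true parent $x$ may carry a block with $k-1$ ones inside what $g$ regards as the prefix, because local search repairs it. So I would work with the potential ``number of blocks of $x$ that are all-ones or have exactly $k-1$ ones, contiguously from the left''. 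I would show that $g$ is monotone in this potential, and that an accepted offspring never decreases it except in an event of probability $O(n^{-2})$ that is compensated in drift. For the lower bound $\Omega(n^2)$, increasing this potential by one requires flipping at least one specific bit of the current block, which happens with probability $O(1/n)$. The free-rider expectation is again $O(1)$, and linearly many blocks are initially unfinished, so a fitness-level lower bound gives $\Omega(n^2)$.
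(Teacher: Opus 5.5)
Your plan is correct and has the same architecture as the paper's proof. Both use fitness levels indexed by the number of leading blocks, with leaving probabilities $\Theta(n^{-k})$, $\Theta(n^{-(k-1)})$ and $O(1/n)$ at the levels $kj+k-1$, and a constant probability of visiting each such level for the lower bounds. For Baldwin, both rest on the observation that the Baldwinian fitness is $kJ+k-1$, where $J$ is the number of contiguous leading blocks with at least $k-1$ ones, so the parent performs a lazy walk in the critical block.

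Where you argue elementarily, the paper cites tools:
\begin{itemize}
\item For the lower bounds, it uses the fitness-level theorem with visiting probabilities, which packages your free-rider counting into one sum over levels.
\item For Darwin, it avoids your dynamic argument. It simply uses the $2^{-k}$ chance that the first solution with $j$ leading all-ones blocks has an all-zero next block.
\item For the Baldwin upper bound, it uses the runtime of RLS on generalised needle functions (at most $2^{k+1}$ moves from any start). It combines this with a Chernoff bound on the number of moves in $16\cdot 2^k n$ iterations and a union bound over double flips.
\end{itemize}
Your constant-size-state-space argument for Baldwin also works, for example by noting that from any plateau state the next at most $k-1$ moves are all increments with probability at least $(2/k)^{k-1}$. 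It gives a much worse dependence on $k$, which is irrelevant for constant $k$.

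Three details need correcting, though none breaks the proof.
\begin{itemize}
\item A new critical block with exactly $k-1$ ones is completed by a single flip before it loses a one with probability about $1/k$. So your claim that it reaches all zeros with probability $1-O(1/n)$ holds only from at most $k-2$ ones. By uniformity, that starting condition has probability at least $1-(k+1)2^{-k}\ge 1/4$, which still gives your constant fraction.
\item In the Lamarckian and Baldwinian cases, local search skips every block with at least $k-1$ ones, not just all-ones blocks. Each such block occurs with probability $(k+1)2^{-k}$ (equal to $3/4$ for $k=2$), not $2^{-k}$. The expected number of skipped blocks per improvement is still at most $3$.
\item The Baldwinian fitness is a strictly increasing function of $J$, so elitist selection never decreases $J$. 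The exceptional $O(n^{-2})$ event and its drift compensation are therefore unnecessary.
\end{itemize}
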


Note critically that we stated the runtimes in terms of iterations as this is the most complete information. When comparing the runtimes over different evolution types, it is important to take into account that an iteration of the Darwinian \oea mostly consists of a single fitness evaluation (of the offspring), whereas one iteration in the two other evolution types consists mostly of running the local search procedure once on the result of mutation. We discuss the implications of this after the sketch proof of Theorem~\ref{thm:main}.

\textbf{Proof Sketch:} To present the proof ideas, let us make precise the algorithm we regard. The \oea works with a parent population of size one (which is why we call this individual the \emph{parent individual}). It is initialised randomly. Then, in each iteration of the main loop, an offspring is generated from the parent via bit-wise mutation with mutation rate~$1/n$, that is, by flipping each bit independently with probability~$1/n$. If the offspring is at least as good as the parent (in terms of the fitness), then it is \emph{accepted}, that is, replaces the parent; otherwise it is discarded. As common in the mathematical runtime analysis of evolutionary algorithms~\cite{NeumannW10,AugerD11,Jansen13,ZhouYQ19,DoerrN20}, we do not specify a termination criterion as we are interested in how long it takes until an optimal solution is generated (via mutation or a local search routine). The pseudo\-code of this algorithm can be found in  Algorithm~\ref{alg:oea}. Precisely, this is the Darwinian version of the \oea. For the Lamarckian one, any solution that is generated, that is, both the random initial individual and all offspring, is immediately replaced by the result from optimising it via a local search routine. 
For the Baldwinian \oea, the individuals are left unchanged, but as their fitness we use the fitness of the result of applying the local search routine to the individual. In Theorem~\ref{thm:main}, we assume that the local search routine is the best-improvement hillclimber with respect to the standard neighbourhood structure of the discrete hypercube, that is, we repeat checking all neighbours (solutions differing in exactly one bit) of our current solution and replacing it with the best one (breaking ties randomly) until no neighbour is strictly better.  
\begin{algorithm2e}[t]
	Let $x$ be chosen uniformly at random from $\{0,1\}^n$\;
	\Repeat{forever}{%
		$y \assign \textsc{Mutate}(x)$\;
		\lIf{$f(y) \geq f(x)$}{$x \assign y$}
	}
	\caption{The \oea to maximise $f : \{0,1\}^n \to \R$.}
	\label{alg:oea}
\end{algorithm2e}

The proof of our result is slightly too technical to give a short summary here, so let us only describe some arguments that show the differences between the three algorithm variants. The main difference is how the three algorithms can leave the local optima of the $\DLB_k$ problem. Here the Darwinian EA has no other chance than to flip the $k$ zeros of the critical block to ones as all nearer solutions have a lower fitness. Flipping $k$ particular bits takes time $\Theta(n^k)$. The remaining factor of $\Theta(n)$ stems from the fact that $\Theta(n)$ times a local optimum has to be left. For the Lamarckian EA, when trapped in a local optimum, flipping $k-1$ of the $k$ zeros in the critical block suffices. The resulting solution is a neighbour of a solution having one more leading all-ones block, hence the local search routine will move to that solution and possibly to even better solutions, but in any case with this mutation the local optimum is left. Flipping $k-1$ out of $k$ fixed bits takes time $\Theta(n^{k-1})$, which explains the factor-$n$ speed-up over the Darwinian EA. In contrast, the Baldwinian EA does not see any local optima. Imagine that its current solution $x$ is a local optimum with $j$ leading all-ones blocks. This clearly has fitness $jk + k-1$ also for the Baldwinian EA since local search does not find a better solution. However, any solution differing from $x$ in up to $k-2$ bits of the critical block has the same Baldwinian fitness since local search would return to $x$  (only for the determination of the fitness!). This allows the Baldwinian EA to perform an unbiased walk on this plateau of constant Baldwinian fitness. When this walk has reached a solution with $k-2$ ones in the critical block, then an easy one-bit flip gives a solution in the basin of attraction of a strictly better solution. This constitutes an increase of the Baldwinian fitness, and prevents the algorithm to return to the previous local optimum. That this way of leaving the local optimum only takes $O(n)$ time is not totally easy to prove and we use an existing runtime analysis for generalised Needle functions~\cite{DoerrK25} for this purpose. We note that we over-simplified the situation a little here -- the true plateau of constant Baldwinian fitness is much larger since also solutions with incomplete earlier blocks can be accepted if these blocks contain $k-1$ ones. We have to leave the details to the following formal proof, which can be found in Appendix \ref{app:proofs}.

\textbf{Towards a Fair Comparison.} We now discuss what the result in Theorem~\ref{thm:main} means when taking into account the different costs of one iteration due to the local search procedure. Such a comparison is non-trivial since it depends on how efficiently the local search procedure is implemented. As seen in the experimental part of this work, with a moderate understanding of the underlying problem, local search can be implemented significantly more efficiently than via $n$ fitness evaluations per local search step. Since this depends on the particular problem, and on the extent to which it is understood, we here take the pessimistic (for Lamarckian and Baldwinian evolution) view that one iteration of the local search procedure uses $n$ fitness evaluations, and we measure the performance in terms of fitness evaluations.

In the Lamarckian \oea, the parent is a local optimum and the offspring differs from it in an expected constant number of bits. This implies that an expected constant number of previous all-ones blocks were destroyed, and each need one iteration of local search to be repaired (unless the repair gets stuck prematurely). If the current critical block is not modified so that local search transforms it into an all-ones block, then at most $k-1 = O(1)$ local search iterations may touch this block. Only with probability $O(n^{-(k-1)})$, the current critical block is modified so that local search improves it (in one iteration) to an all-ones block. Above that, only $O(n)$ fitness levels remain, so at most $O(n)$ further iterations of the local search procedure are possible (in reality, this number, in expectation, is only $O(1)$, but we do not need this here). In summary, we see that each local search call in the Lamarckian \oea performs an expected number of $O(1)$ iterations. When counting one iteration of the local search procedure as $n$ fitness evaluations, the runtime of the Lamarckian \oea would be $O(n^{k+1})$, on par with the Darwinian \oea. 

For the Baldwinian \oea, estimating the cost of the local search procedure is a little harder. As an upper bound, we can argue that we have only $n+1$ fitness levels, hence at most $n$ iterations per call of the local search procedure, hence per iteration of the \oea. Unfortunately, this estimate is asymptotically tight. Consider a block and a time where this block contributes to the Baldwinian fitness, that is, local search had transformed this block and all previous ones into all-ones blocks. Since we have an elitist algorithm, this property remains fulfilled throughout the further run of the algorithm. In particular, this block will always contain either $k-1$ or $k$ ones. Since the algorithm has no strong preference between these two states, in the long run, both states will occur with constant rate (to make this argument precise, we would need to compute the transition probabilities, which are both $\Theta(1/n)$ for switching the state, and the mixing time, which is $O(n)$ as follows from elementary properties of two-state Markov chains). Hence the expected number of blocks that contribute to the fitness, but have $k-1$ ones (and thus need a local search iteration), of a solution with fitness $\Theta(n)$ is $\Theta(n)$. Hence in a constant fraction of the iterations, the local search procedure performs $\Omega(n)$ iterations. When counting a local search iteration as $n$ fitness evaluations, this leads to a runtime of $\Theta(n^4)$ fitness evaluations for the Baldwinian \oea. Hence in this view, the Baldwinian \oea is worse than the two others for the easiest case of $k=2$, all three are asymptotically equal for $k=3$, and Baldwinian evolution outperforms the others for all $k \ge 4$.

In a practical application, problem-specific information might allow for more efficient implementations of the local search routine, either because not all neighbours of the current solution need to be inspected, or because the fitness of a solution known to be a neighbour can be computed more efficiently than a general fitness evaluation. In this case, the Lamarckian \oea would have an asymptotic advantage over the Darwinian one for all $k$. Since an iteration in any case takes at least one fitness evaluation, the runtime would still be at least $\Omega(n^k)$ fitness evaluations, showing that the advantage of the Lamarckian \oea is limited to a factor of $O(n)$, and that the Baldwinian \oea also without accounting for such improvements outperforms it for $k \ge 5$. 

Overall, a precise performance comparison depends critically on the local search procedure's cost. The Baldwinian \oea will outperform the others from a certain small value of $k$ on. This aligns with the general belief in the community that Baldwinian evolution is more suitable for more difficult problems.

\section{Conclusion}

We have taken a fresh look at Darwinian, Baldwinian, Lamarckian, and LB evolution, which have mostly existed independently in the EA literature without empirical or formal theoretical comparisons. On the empirical side, we demonstrated that local search augmentation consistently improves performance over Darwinian evolution. Baldwinian evolution is the most consistent choice overall, while Lamarckian evolution is a competitive alternative on MC. The LB hybrid offers no systematic advantage over either. Finally, we identified a generalist parameter configuration per evolution type, providing a practical off-the-shelf starting point for future studies. On the theoretical side, we extended the \DLB benchmark to arbitrary block length $k \ge 2$ and proved tight asymptotic runtime bounds for the \oea: Baldwinian evolution runs in $\Theta(n^2)$ iterations, Lamarckian evolution in $\Theta(n^k)$, and Darwinian evolution in $\Theta(n^{k+1})$.

When accounting for the cost of the local search procedure in fitness evaluations, the runtime ranking depends on implementation, but Baldwinian evolution outperforms the others from a small value of $k$ onwards. These results have clear practical implications: local search augmentation should more frequently be considered over Darwinian evolution; in particular, Baldwinian evolution is the most promising starting point in the absence of problem-specific knowledge. In addition, our generalist parameter configurations for the different evolution types could be a useful practical reference.

\subsection*{Acknowledgements}
This work was supported by FMJH Program PGMO and the French National Research Agency (ANR) via the ``GraspGNNs'' JCJC grant (ANR-24-CE23-3888).

\bibliographystyle{splncs04}
\bibliography{bib/alles_ea_master,bib/ich_master,bib/citations}

\newpage
\appendix

\section{Encoding Requirements for Lamarckian Evolution}
\label{app:lamarck}

We begin by supplementing our review of related work with a note on a subtle advantage that Baldwinian evolution has been observed to gain. A critical requirement for Lamarckian evolution is that improvements found by local search must be successfully written back into the encoding of an individual. In the combinatorial settings considered in this work, this mapping is trivial since local search operates directly on the encoding, and the improved solution can therefore replace the offspring. In other domains, such as evolutionary robotics, the encoding describes how to build an individual rather than the individual itself, and local search operates on the resulting structure \cite{LuoMTE23,luo2025}. Translating improvements back into the encoding is then a non-trivial design problem, and is sometimes impossible. Baldwinian evolution leaves the encoding untouched and only modifies the fitness of the individual, which is a structural advantage in settings with complex representations.

\section{Problem Formulations}
\label{app:problems}

In this section, we formally define the two combinatorial optimisation problems studied in this paper. Recall that we let $G = (V, E)$ be an undirected graph, where $V = \{v_1, \dots, v_n\}$ is the set of vertices and $E \subseteq \{ \{v_i, v_j\} : v_i, v_j \in V\}$ is the set of edges. A candidate solution is represented by a binary vector $x \in \{0, 1\}^n$, where $x_i = 1$ if vertex $v_i$ is selected and $x_i = 0$ otherwise.

\subsection{Maximum Independent Set (MIS)}
An independent set of a graph $G$ is a subset of vertices $S \subseteq V$ such that no two vertices in $S$ are connected by an edge. The MIS problem seeks to find an independent set $S$ of maximal cardinality $|S|$. Using the binary representation $x$, where $x_i = 1$ if $v_i \in S$ and $x_i = 0$ otherwise, the problem can be formulated as
\begin{equation}
    \begin{aligned}
        & \text{maximise} && \phiMIS(x) = \sum_{i=1}^n x_i \\
        & \text{subject to} && x_i + x_j \le 1, \quad \forall \{v_i, v_j\} \in E \\
        & && x_i \in \{0, 1\}, \quad i = 1, \dots, n.
    \end{aligned}
\end{equation}

\subsection{Maximum Cut (MC)}
The Maximum Cut problem involves partitioning the vertex set $V$ into two disjoint sets $S$ and $V \setminus S$. The goal is to maximise the number of edges that have one endpoint in $S$ and the other in $V \setminus S$. Using the binary encoding $x$, where $x_i = 1$ if $v_i \in S$ and $x_i = 0$ if $v_i \in V \setminus S$, the problem can be formulated as
\begin{equation}
    \begin{aligned}
        & \text{maximise} && \phiMC(x) = \sum_{\{v_i, v_j\} \in E} (x_i(1 - x_j) + x_j(1 - x_i)) \\
        & \text{subject to} && x_i \in \{0, 1\}, \quad i = 1, \dots, n.
    \end{aligned}
\end{equation}
Unlike MIS, MC is an unconstrained optimisation problem, as every binary vector $x$ corresponds to a valid partition of the graph.

\section{Additional Experimental Details}
\label{app:experimental_details}

This appendix section collects the technical details that support the empirical analysis in Section~\ref{sec:results}. It includes a note on solver performance (Appendix~\ref{app:benchmark}), implementation choices for the large-scale parameter search (Appendix~\ref{app:implementation}), the full parameter search space and best per-dataset configurations (Tables~\ref{tab:parameter_grid} and~\ref{tab:evolution_params_combined}, Appendix~\ref{app:hyperparams}), convergence trajectories (Figure~\ref{fig:convergence}, Appendix~\ref{app:convergence}), population diversity over runs (Figure~\ref{fig:diversity}, Appendix~\ref{app:population_diversity}), diversity of optimal solutions found (Table~\ref{tab:unique_solutions}, Appendix~\ref{app:diversity_optim_sol}), mean execution times (Table~\ref{tab:runtime}, Appendix~\ref{app:runtime}), and the procedure for selecting the universal parameter combination and computing the universality cost (Appendix~\ref{app:universality}).

\subsection{A Note On the Displayed Solver Performance}
\label{app:benchmark}

While all baseline results come from the GraphBench paper, we recomputed the MIS solver results from the true labels available in the test set provided with the GraphBench benchmark. This recomputation of the results led to a small and ultimately inconsequential, difference in the reported solver values.

\subsection{Implementation Details}
\label{app:implementation}

Given the scale of the GraphBench datasets and the large number of EA runs required by the parameter search ($105{,}840$ runs per problem), and the use of local search steps, computational efficiency is critical. The graph's topology is represented using sparse adjacency matrices in Compressed Sparse Row (CSR) format. Fitness evaluation for both problems is implemented as sparse matrix-vector multiplication via SciPy's sparse routines, reducing the complexity of fitness evaluation from $O(n^2)$ with a dense adjacency matrix to $O(|E|)$ with the sparse format, since only the $|E|$ non-zero entries are visited during the matrix-vector product. This allows for efficient scaling to the larger benchmark instances investigated in this study. The local search and local search logic for both problems is implemented in JIT-compiled Python using Numba to achieve near-native execution speeds. The complete source code reproducing our experiments can be accessed on github: \url{https://github.com/Hypatia-II/polerina}.

\subsection{Parameter Configurations}
\label{app:hyperparams}

Table~\ref{tab:parameter_grid} reports the full parameter search space shared by all four evolution types. 
\begin{table}[h!]
\centering
\caption{Parameter Search Space for Evolutionary Types: Darwinian, Baldwinian, Lamarckian and LB. $^\dagger$Only applied in the LB evolution type.}
\label{tab:parameter_grid}
\small
\begin{tabular}{lc}
\toprule
\textbf{Parameter} & {\textbf{Values Tested}} \\
\midrule
\quad Population Size ($\mu$)                        & \{10, 50, 250\} \\
\quad Initialisation Type                            & Random \\
\quad Number of Offspring ($\lambda$)                & \{10, 50, 250\} \\
\quad Crossover Rate ($\rc$)                         & \{0.0, 0.5, 0.9, 1.0\} \\
\quad Mutation Type                                  & Bernoulli \\
\midrule
\quad Lamarckian Probability ($\probLB$)$^\dagger$   & \{0.15, 0.3, 0.5, 0.9\} \\
\bottomrule
\end{tabular}
\end{table}
\FloatBarrier

Table~\ref{tab:evolution_params_combined} reports the best configuration selected for each evolution type, dataset, and problem after grid search on the training split.
\begin{table}[h!]
\centering
\caption{Best evolution type parameter configuration for MIS and MC. All configurations used Bernoulli mutation and random initialisation. Values shown as $\mu/\lambda/\rc$; Darwinian, Baldwinian, and Lamarckian evolution always have $p_\text{LB}=0$ (---).}
\label{tab:evolution_params_combined}
\footnotesize
\renewcommand{\arraystretch}{1.0}
\begin{tabular}{llcccc}
\toprule
& & \multicolumn{2}{c}{\textbf{MIS}} & \multicolumn{2}{c}{\textbf{MC}} \\
\cmidrule(lr){3-4}\cmidrule(lr){5-6}
\textbf{Dataset} & \textbf{Type} & $\boldsymbol{\mu/\lambda/\rc}$ & $\boldsymbol{p_\text{LB}}$ & $\boldsymbol{\mu/\lambda/\rc}$ & $\boldsymbol{p_\text{LB}}$ \\ \midrule
    \multirow{4}{*}{ER Small} & Darwinian & $250/50/1.0$ & --- & $10/10/0.0$ & --- \\
     & Baldwinian & $250/250/1.0$ & --- & $250/50/1.0$ & --- \\
     & Lamarckian & $250/250/0.0$ & --- & $250/250/1.0$ & --- \\
     & LB & $250/250/1.0$ & 0.15 & $250/250/1.0$ & 0.9 \\
\cmidrule{1-6}
    \multirow{4}{*}{ER Large} & Darwinian & $50/250/1.0$ & --- & $10/10/0.5$ & --- \\
     & Baldwinian & $50/50/1.0$ & --- & $250/50/1.0$ & --- \\
     & Lamarckian & $250/250/0.0$ & --- & $250/250/1.0$ & --- \\
     & LB & $250/10/1.0$ & 0.15 & $250/250/1.0$ & 0.9 \\
\cmidrule{1-6}
    \multirow{4}{*}{BA Small} & Darwinian & $250/250/1.0$ & --- & $10/10/0.5$ & --- \\
     & Baldwinian & $250/50/0.9$ & --- & $250/250/0.9$ & --- \\
     & Lamarckian & $10/10/0.0$ & --- & $250/250/0.9$ & --- \\
     & LB & $250/250/0.9$ & 0.15 & $250/250/0.9$ & 0.9 \\
\cmidrule{1-6}
    \multirow{4}{*}{BA Large} & Darwinian & $50/250/1.0$ & --- & $10/10/1.0$ & --- \\
     & Baldwinian & $250/10/1.0$ & --- & $250/250/1.0$ & --- \\
     & Lamarckian & $250/10/1.0$ & --- & $250/250/0.9$ & --- \\
     & LB & $250/250/1.0$ & 0.15 & $250/250/0.9$ & 0.9 \\
\cmidrule{1-6}
    \multirow{4}{*}{RB Small} & Darwinian & $250/10/0.5$ & --- & $250/250/1.0$ & --- \\
     & Baldwinian & $250/50/0.9$ & --- & $10/10/1.0$ & --- \\
     & Lamarckian & $10/10/0.0$ & --- & $50/250/1.0$ & --- \\
     & LB & $250/10/1.0$ & 0.15 & $50/50/0.9$ & 0.5 \\
\cmidrule{1-6}
    \multirow{4}{*}{RB Large} & Darwinian & $50/250/0.9$ & --- & $10/50/1.0$ & --- \\
     & Baldwinian & $250/250/1.0$ & --- & $250/250/0.9$ & --- \\
     & Lamarckian & $10/10/0.0$ & --- & $250/50/0.9$ & --- \\
     & LB & $50/10/1.0$ & 0.15 & $250/10/0.9$ & 0.9 \\
\bottomrule
\end{tabular}
\end{table}
\FloatBarrier

\subsection{Convergence}
\label{app:convergence}
We now provide the mean convergence speed results in Figure \ref{fig:convergence}.
\begin{figure}[h!]
    \centering
    \begin{subfigure}{\textwidth}
        \includegraphics[width=\textwidth]{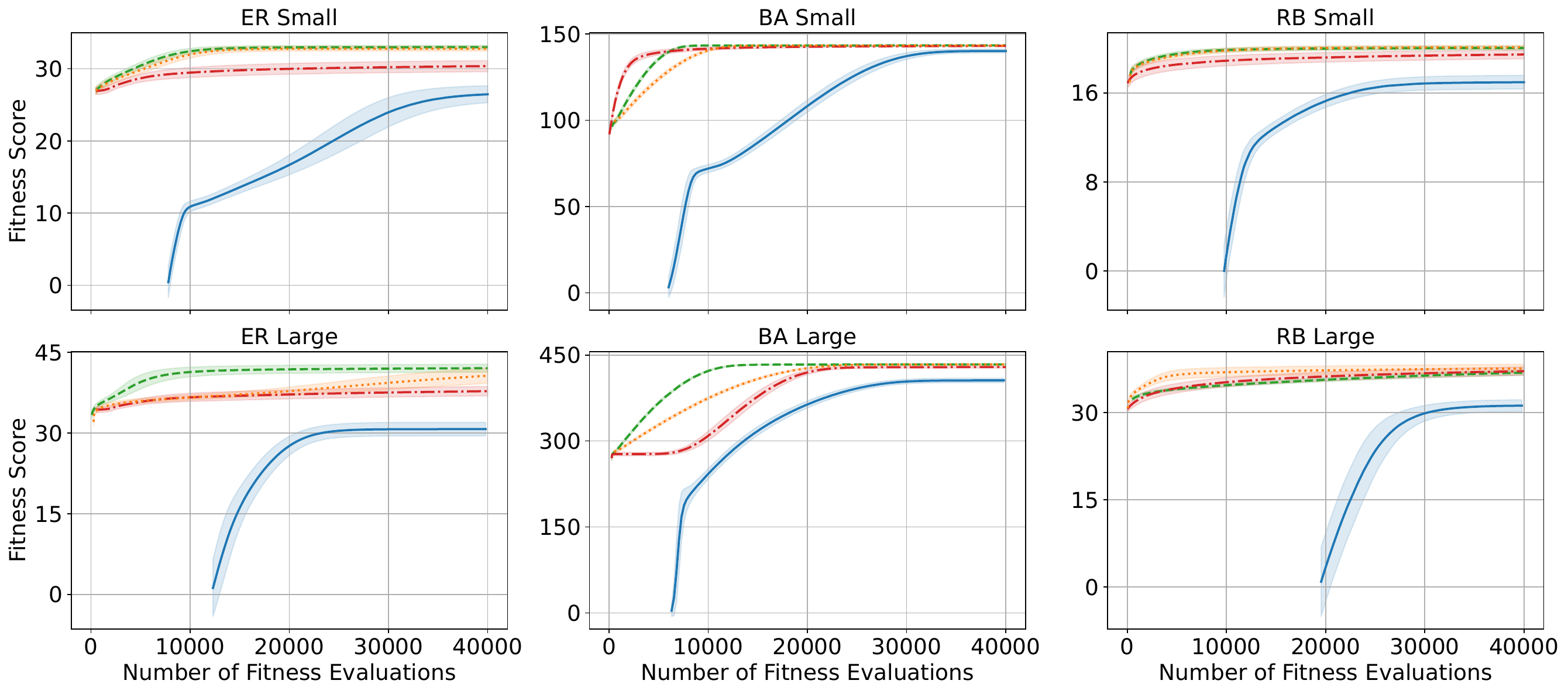}
        \caption{MIS}
        \label{fig:mis_convergence}
    \end{subfigure}
    \begin{subfigure}{\textwidth}
        \includegraphics[width=\textwidth]{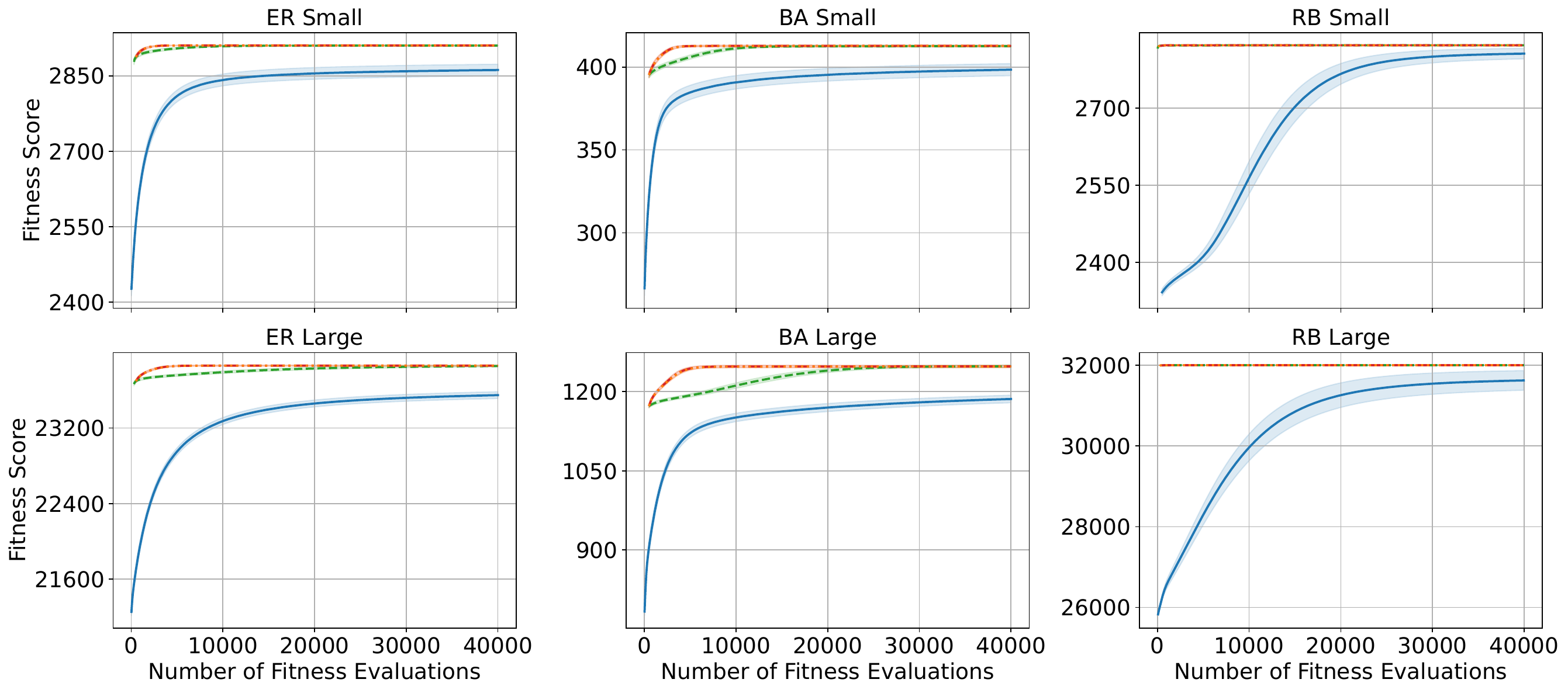}
        \caption{MC}
        \label{fig:mc_convergence}
    \end{subfigure}
    \caption{Mean fitness score ($\uparrow$) over fitness evaluations per EA evolution across datasets for MIS and MC. Curves use the per-(dataset, problem) best parameter configuration. Darwinian (solid blue), Baldwinian (dashed green), Lamarckian (dash-dot red), LB (dotted orange).}
    \label{fig:convergence}
\end{figure}
\FloatBarrier

\newpage
\subsection{Diversity of Optimal Solutions}
\label{app:diversity_optim_sol}

Table~\ref{tab:unique_solutions} provides results on the diversity of the optimal solutions found by the different evolution types.

\begin{table}[h!]
\centering
\caption{Mean unique optimal solutions ($\uparrow$, $\pm$ std) per EA evolution across datasets for MIS and MC. Bold typesets the most diverse EA.}
\label{tab:unique_solutions}
\resizebox{\textwidth}{!}{%
\begin{tabular}{llrrrrrr}
\toprule
\textbf{Problem} & \textbf{Evol. Type} & \textbf{ER Small} & \textbf{ER Large} & \textbf{BA Small} & \textbf{BA Large} & \textbf{RB Small} & \textbf{RB Large} \\
\midrule
   \multirow{4}{*}{MIS} & Darwinian & $3.7$ {\scriptsize $\pm$ $2.4$} & $1.5$ {\scriptsize $\pm$ $0.6$} & $5.7$ {\scriptsize $\pm$ $2.7$} & $1.7$ {\scriptsize $\pm$ $0.7$} & $15.0$ {\scriptsize $\pm$ $10.2$} & $3.2$ {\scriptsize $\pm$ $2.0$} \\
    & Baldwinian & $\mathbf{71.9}$ {\scriptsize $\pm$ $\mathbf{56.6}$} & $\mathbf{94.9}$ {\scriptsize $\pm$ $\mathbf{101.1}$} & $\mathbf{14290.2}$ {\scriptsize $\pm$ $\mathbf{1156.0}$} & $\mathbf{6055.2}$ {\scriptsize $\pm$ $\mathbf{2696.0}$} & $\mathbf{12651.4}$ {\scriptsize $\pm$ $\mathbf{3766.2}$} & $1933.8$ {\scriptsize $\pm$ $1278.8$} \\
    & Lamarckian & $28.6$ {\scriptsize $\pm$ $21.6$} & $24.2$ {\scriptsize $\pm$ $16.6$} & $442.8$ {\scriptsize $\pm$ $207.3$} & $329.8$ {\scriptsize $\pm$ $152.8$} & $1388.3$ {\scriptsize $\pm$ $549.7$} & $856.3$ {\scriptsize $\pm$ $477.7$} \\
    & LB & $57.0$ {\scriptsize $\pm$ $47.9$} & $34.3$ {\scriptsize $\pm$ $41.4$} & $7915.6$ {\scriptsize $\pm$ $886.7$} & $2276.3$ {\scriptsize $\pm$ $1138.7$} & $7651.9$ {\scriptsize $\pm$ $2541.1$} & $\mathbf{5739.1}$ {\scriptsize $\pm$ $\mathbf{2545.3}$} \\
\midrule
   \multirow{4}{*}{MC} & Darwinian & $7.3$ {\scriptsize $\pm$ $5.7$} & $3.7$ {\scriptsize $\pm$ $2.6$} & $1133.0$ {\scriptsize $\pm$ $753.8$} & $405.1$ {\scriptsize $\pm$ $335.7$} & $99.1$ {\scriptsize $\pm$ $66.4$} & $49.1$ {\scriptsize $\pm$ $25.3$} \\
    & Baldwinian & $8.3$ {\scriptsize $\pm$ $4.0$} & $2.5$ {\scriptsize $\pm$ $2.0$} & $14475.7$ {\scriptsize $\pm$ $3012.8$} & $2253.4$ {\scriptsize $\pm$ $1870.7$} & $8781.2$ {\scriptsize $\pm$ $350.2$} & $8974.9$ {\scriptsize $\pm$ $734.5$} \\
    & Lamarckian & $\mathbf{9.0}$ {\scriptsize $\pm$ $\mathbf{3.9}$} & $\mathbf{8.0}$ {\scriptsize $\pm$ $\mathbf{6.4}$} & $\mathbf{30502.3}$ {\scriptsize $\pm$ $\mathbf{1931.0}$} & $\mathbf{27529.0}$ {\scriptsize $\pm$ $\mathbf{2885.1}$} & $10935.8$ {\scriptsize $\pm$ $323.4$} & $\mathbf{12030.7}$ {\scriptsize $\pm$ $\mathbf{461.3}$} \\
    & LB & $\mathbf{9.0}$ {\scriptsize $\pm$ $\mathbf{3.9}$} & $\mathbf{8.0}$ {\scriptsize $\pm$ $\mathbf{6.4}$} & $30412.5$ {\scriptsize $\pm$ $2015.7$} & $27449.5$ {\scriptsize $\pm$ $2967.7$} & $\mathbf{11250.6}$ {\scriptsize $\pm$ $\mathbf{199.3}$} & $12024.1$ {\scriptsize $\pm$ $526.9$} \\
\bottomrule
\end{tabular}}
\end{table}
\FloatBarrier

\subsection{Population Diversity}
\label{app:population_diversity}

Figure \ref{fig:diversity} provides results on the population diversity throughout the runs for each evolution types.

\begin{figure}[h!]
    \centering
    \begin{subfigure}{\textwidth}
        \includegraphics[width=\textwidth]{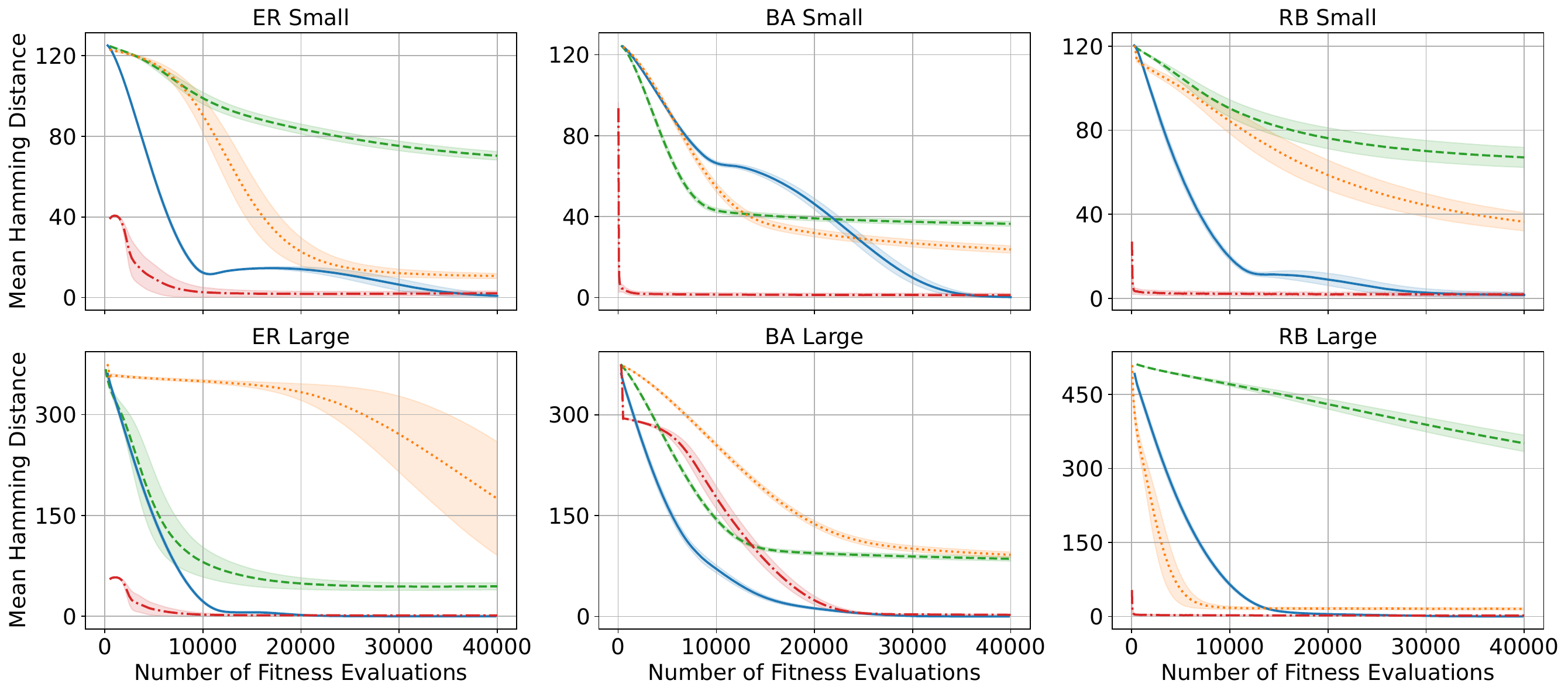}
        \caption{MIS}
        \label{fig:mis_diversity}
    \end{subfigure}
    \begin{subfigure}{\textwidth}
        \includegraphics[width=\textwidth]{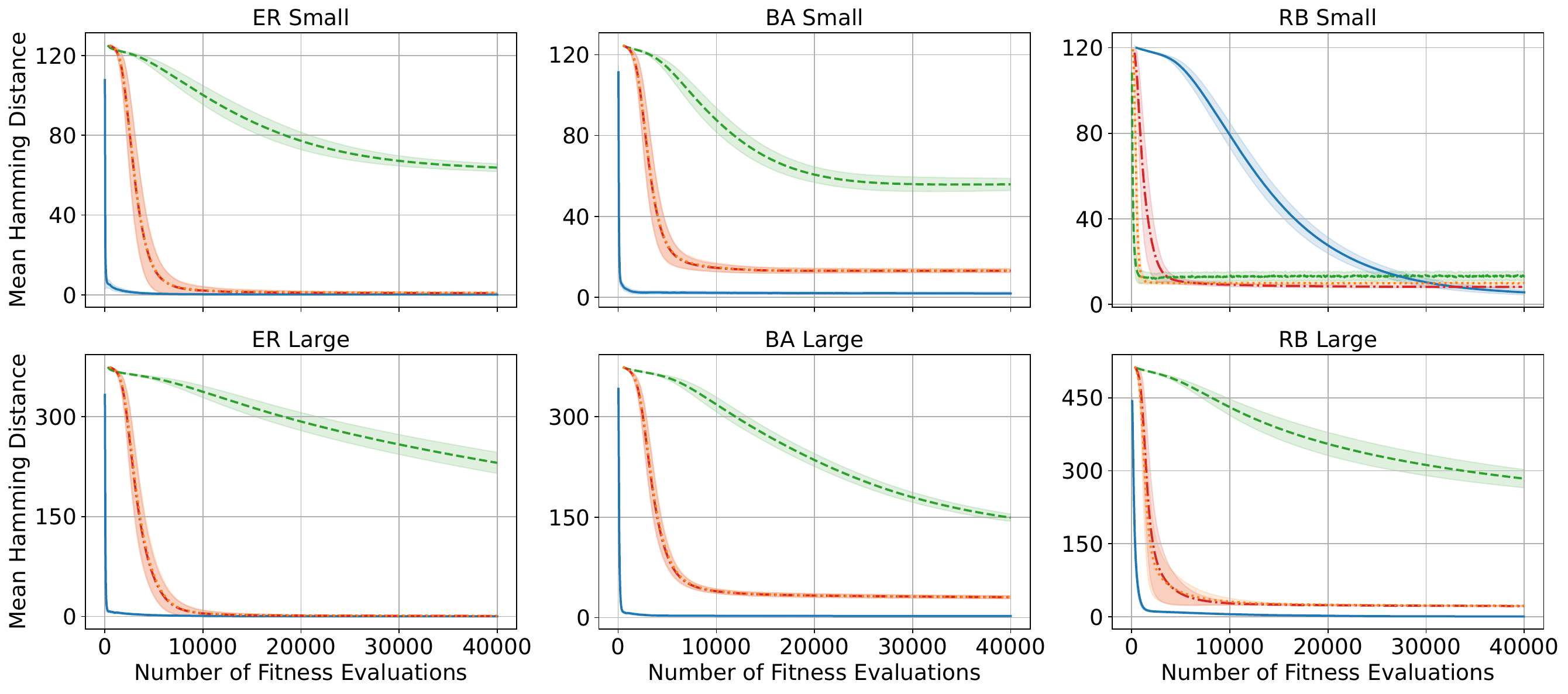}
        \caption{MC}
        \label{fig:mc_diversity}
    \end{subfigure}
    \caption{Mean pairwise Hamming distance as a measure of population diversity over fitness evaluations per EA evolution across datasets for MIS and MC. Curves use the per-(dataset, problem) best parameter configuration. Darwinian (solid blue), Baldwinian (dashed green), Lamarckian (dash-dot red), LB (dotted orange).}
    \label{fig:diversity}
\end{figure}
\FloatBarrier

\subsection{Runtime}
\label{app:runtime}

Table~\ref{tab:runtime} provides runtime results for the different evolution types.

\begin{table}[h!]
\centering
\caption{Mean runtime in seconds ($\downarrow$, $\pm$ std) per EA evolution across datasets for MIS and MC. Bold typesets the fastest EA.}
\label{tab:runtime}
\resizebox{\textwidth}{!}{%
\begin{tabular}{llrrrrrr}
\toprule
\textbf{Problem} & \textbf{Evol. Type} & \textbf{ER Small} & \textbf{ER Large} & \textbf{BA Small} & \textbf{BA Large} & \textbf{RB Small} & \textbf{RB Large} \\
\midrule
   \multirow{4}{*}{MIS} & Darwinian & $\mathbf{0.36}$ {\scriptsize $\pm$ $\mathbf{0.00}$} & $\mathbf{1.97}$ {\scriptsize $\pm$ $\mathbf{0.02}$} & $\mathbf{0.17}$ {\scriptsize $\pm$ $\mathbf{0.00}$} & $\mathbf{0.61}$ {\scriptsize $\pm$ $\mathbf{0.01}$} & $0.75$ {\scriptsize $\pm$ $0.01$} & $\mathbf{2.54}$ {\scriptsize $\pm$ $\mathbf{0.10}$} \\
    & Baldwinian & $1.93$ {\scriptsize $\pm$ $0.45$} & $13.49$ {\scriptsize $\pm$ $0.43$} & $0.64$ {\scriptsize $\pm$ $0.02$} & $4.50$ {\scriptsize $\pm$ $0.13$} & $1.65$ {\scriptsize $\pm$ $0.07$} & $19.69$ {\scriptsize $\pm$ $0.51$} \\
    & Lamarckian & $0.53$ {\scriptsize $\pm$ $0.42$} & $2.12$ {\scriptsize $\pm$ $0.02$} & $0.32$ {\scriptsize $\pm$ $0.00$} & $2.12$ {\scriptsize $\pm$ $0.06$} & $\mathbf{0.56}$ {\scriptsize $\pm$ $\mathbf{0.02}$} & $3.40$ {\scriptsize $\pm$ $0.07$} \\
    & LB & $0.84$ {\scriptsize $\pm$ $0.18$} & $12.17$ {\scriptsize $\pm$ $1.01$} & $0.37$ {\scriptsize $\pm$ $0.02$} & $2.19$ {\scriptsize $\pm$ $0.06$} & $1.29$ {\scriptsize $\pm$ $0.07$} & $4.60$ {\scriptsize $\pm$ $0.20$} \\
\midrule
   \multirow{4}{*}{MC} & Darwinian & $\mathbf{0.44}$ {\scriptsize $\pm$ $\mathbf{0.00}$} & $\mathbf{2.62}$ {\scriptsize $\pm$ $\mathbf{0.01}$} & $0.26$ {\scriptsize $\pm$ $0.01$} & $\mathbf{0.57}$ {\scriptsize $\pm$ $\mathbf{0.00}$} & $\mathbf{0.29}$ {\scriptsize $\pm$ $\mathbf{0.00}$} & $\mathbf{2.45}$ {\scriptsize $\pm$ $\mathbf{0.07}$} \\
    & Baldwinian & $1.60$ {\scriptsize $\pm$ $0.44$} & $11.07$ {\scriptsize $\pm$ $0.25$} & $0.53$ {\scriptsize $\pm$ $0.01$} & $2.19$ {\scriptsize $\pm$ $0.05$} & $1.74$ {\scriptsize $\pm$ $0.05$} & $17.97$ {\scriptsize $\pm$ $0.42$} \\
    & Lamarckian & $0.89$ {\scriptsize $\pm$ $0.42$} & $4.93$ {\scriptsize $\pm$ $0.09$} & $\mathbf{0.24}$ {\scriptsize $\pm$ $\mathbf{0.01}$} & $0.94$ {\scriptsize $\pm$ $0.02$} & $0.64$ {\scriptsize $\pm$ $0.02$} & $6.63$ {\scriptsize $\pm$ $0.24$} \\
    & LB & $0.76$ {\scriptsize $\pm$ $0.16$} & $5.02$ {\scriptsize $\pm$ $0.06$} & $0.25$ {\scriptsize $\pm$ $0.00$} & $1.10$ {\scriptsize $\pm$ $0.01$} & $0.64$ {\scriptsize $\pm$ $0.02$} & $8.96$ {\scriptsize $\pm$ $0.41$} \\
\bottomrule
\end{tabular}}
\end{table}
\FloatBarrier

\subsection{Cross-Problem Parameter Combination Selection and Universality Cost}
\label{app:universality}

We now detail the procedure behind the parameter universality cost reported in Table~\ref{tab:normalized_agg_loss}. For each evolution type, the goal is to select a universal parameter combination that performs well across both problems (MIS and MC) on all datasets, and to quantify the performance lost by using it in place of a specialised combination tuned per (problem, dataset) pair.

\textbf{Cross-Problem Parameter Combination Selection.} For each (problem, dataset, parameter combination), we compute a mean score by averaging over repetitions and graphs. These raw scores have scales that vary across datasets and problems, and therefore cannot be compared directly. We thus normalise them within each (problem, dataset, evolution type) group: every parameter combination's mean score is divided by the best mean score achieved for that evolution type on that (problem, dataset) pair. This places all values on a common $(0,1]$ scale, where $1$ marks the best-performing combination of that evolution type on that (problem, dataset) pair. For each evolution type, we then average these normalised scores across all (problem, dataset) pairs and select the highest-scoring combination as the universal one. The resulting combinations are listed in the caption of Table~\ref{tab:normalized_agg_loss}.

\textbf{Universality Cost.} For each (evolution type, problem, dataset) cell, we compare the mean score of the universal parameter combination, $s_{\text{universal}}$, against the score of the best-performing combination for that (problem, dataset) pair, $s_{\text{specialised}}$. The relative loss is
\[
\text{loss} = \frac{s_{\text{specialised}} - s_{\text{universal}}}{s_{\text{specialised}}} \times 100\%.
\]
Table~\ref{tab:normalized_agg_loss} reports this loss for each problem, dataset and evolution type, along with the per-problem averages.

\section{Formal Proof of Theorem \ref{thm:main}}
\label{app:proofs}

We now provide the full proof of Theorem \ref{thm:main}. Note that since we treat $k$ as a constant, in the following proof we do not optimise for the dependence of our runtime bounds on~$k$. In fact, our result is insightful already for $k=2$ and $k=3$, hence determining the asymptotics also in $k$ is less important.

\begin{proof}[of Theorem~\ref{thm:main}]
  We start with proving the upper bound for the Darwinian EA, where we can use a standard fitness level argument. Assume that the current solution of the EA has $c(x)-1$ leading all-ones blocks and that the critical block contains $\ell \in [1..k]$ zeros. Then with probability at least $n^{-\ell} (1 - 1/n)^{n-\ell} \ge \frac 1e n^{-k}$, exactly the zeros in the critical block are flipped and consequently the offspring contains at least $c(x)$ leading all-ones blocks. Hence the expected waiting time for increasing the number of leading all-ones blocks is $O(n^k)$. Note that such an offspring is accepted, that is, becomes the new parent, and note further, that the quantity $c(x)$ for the parent individual cannot reduce over time due to the elitist selection of the \oea. Consequently, after at most $n/k = O(n)$ such improvements, the EA has found the optimum. Adding the waiting times for the improvements gives the desired bound for the expected runtime of $O(n^{k+1})$. 

  For the lower bound, we use a recent result~\cite{DoerrK24level} that allows us to prove lower bounds via fitness level arguments and level visiting probabilities. We use the true fitness levels, that is, for each $i \in [0..n]$ the $i$-st fitness level consists of all solutions having fitness exactly~$i$. We are particularly interested in the case $i = jk+k-1$, that is, the solutions having $j$ leading all-ones blocks and then only zeros in the critical block. We first argue that each such fitness level is visited with probability at least $\frac{1}{2^k}$ (this is a very pessimistic estimate, but it suffices for our purposes). Fix $j \in [0..n/k-1]$. Let $x$ be the first solution generated by the algorithm that has at least $j$ leading all-ones blocks. If $x$ is the random initial solution, then, conditional on having at least $j$ leading all-ones blocks, the probability that $x_{jk+1}, \dots, x_{jk+k}$ are all zero is $2^{-k}$. In this case, we have $\DLB(x) = jk+k-1$. 
  Let now $x$ not be the initial solution. Then it was generated from some solution $y$ having less than $j$ leading all-ones blocks. A simple induction shows that in $y$ and in all solutions previously generated by the algorithm, the bits at positions $jk+1, \dots, jk+k$ are independently and uniformly distributed (since this is true for the initial solution and since also in the following, the values of these bits never had an influence on the run of the algorithm). This is a classic argument known from the analysis of \leadingones, see \cite[Lemma~1]{BottcherDN10} or \cite[Lemma~1]{LehreW12}, that has been extended to $\DLB_2$ in~\cite[Lemma~5]{WangZD24}. 
  Since $x$ is an offspring of such a $y$ and since all we condition on is that $x$ has the first $jk$ bits equal to one, also in $x$ the bits at positions $jk+1, \dots, jk+k$ are independently and uniformly distributed. Consequently, they are all zero with probability $2^{-k}$. We have thus shown that the first solution $x$ with at least $j$ leading all-ones blocks with probability $2^{-k}$ has fitness $jk+k-1$. Since any run of the algorithm at some time visits such a solution $x$, we know that it visits the $(jk+k-1)$-st fitness level with probability at least $v_{jk+k-1} := 2^{-k}$. When on this fitness level, the probability to leave it in one iteration is exactly $p_{jk+k-1} := (1-1/n)^{jk} n^{-k}$, since any solution with higher fitness has the first $jk+k$ bits equal to one. By~\cite[Theorem~8]{DoerrK24level}, the expected runtime of our algorithm is at least $\sum_{i=0}^{n-1} \frac{v_i}{p_i}$, where $v_i$ is a lower bound for the probability to visit the $i$-st fitness level and $p_i$ is an upper bound for the probability to leave it in one iteration. Using the values we just estimated and only regarding these fitness levels, we obtain the lower bound 
  \[
  \sum_{j=0}^{n/k-1} \frac{v_{jk+k-1}}{p_{jk+k-1}} \ge \sum_{j=0}^{n/k-1} 2^{-k} n^k = \Omega(n^{k+1}),
  \]
  since we treat $k$ as a constant. 

  We now continue with the proof for the Lamarckian \oea. By definition, all its parent solutions are local optima. By definition of the \DLB problem, these are exactly the bit strings with $jk$ leading ones followed by $k$ zeros, for some $j \in [0..n/k-1]$, together with the global optimum, the all-ones string. For the upper bound, we use a fitness level argument for these fitness levels. 
  Assume that the current solution has $jk$ leading ones followed by $k$ zeros, for some $j \in [0..n/k-1]$. Then with probability exactly $(1-1/n)^{jk+1} n^{-(k-1)} \ge \frac 1e n^{-k+1}$ the offspring has $jk + k-1$ leading ones followed by a zero. Now the local search procedure finds two types of improving neighbours, namely $k-1$ in which a one in the critical block is flipped to zero and the one in which the last zero in the critical block is flipped to one. The latter having more leading all-ones blocks than the former, it also has the highest fitness, hence this is the next solution visited by the local search procedure. Since the local search procedure does not accept worsenings, these $j+1$ leading all-ones blocks will not be touched anymore. Consequently, we have obtained a solution with fitness better than the parent, that is, we have left the fitness level. We have thus shown that each of the $n/k$ non-optimal fitness levels are left with probability at least $\frac 1e n^{-k+1}$ per iteration. By the classic fitness level theorem~\cite{Wegener05}, this gives an upper bound of $(n/k) en^{k-1} = O(n^k)$ iterations for the expected runtime.

  For the lower bound, we again use the fitness level method for lower bounds, with some extra care with respect to the local search step. We estimate the visiting probabilities for the fitness levels. Recall that for the Lamarckian algorithm only solutions with fitness $kj + k - 1$, that is, having $j$ leading all-ones blocks and then $k$ zeros, will be taken as parent individual.   
  Let $x$ be the first individual forming the parent population that has fitness at least $kj + k - 1$. This individual is either the random initial individual $y$ to which local search was applied, or a mutation offspring $y$ to which local search was applied. Since all previous individuals had a lower fitness (also after applying local search), as in the lower-bound proof for Darwinian evolution, the bits $y_{kj+1}, \dots, y_{kj+k}$ are uniformly and independently distributed. In particular, with probability $1 - (k+1)2^{-k}$, at least two of them are zero. 
  Consequently, in this case, the local search procedure applied to $y$ will, after having possibly optimised earlier blocks to all-ones blocks, optimise the $(j+1)$-st block to an all-zero block. 
  This shows that the Lamarckian EA visits the fitness level $kj + k - 1$ with probability $v_{kj + k - 1} = 1 - (k+1)2^{-k} \ge 1/4$.   
  When on such a fitness level, that is, when the parent individual has $j$ leading all-ones blocks followed by an all-zeros block, to leave this fitness level to a higher level it is necessary that the mutation flips at least $k-1$ of the zeros in the critical block (otherwise, the local search cannot transform the critical block into an all-ones block). This happens with probability $kn^{-k+1}+n^{-k} \le (k+1)n^{-k+1} =: p_{kj + k - 1}$. Using the fitness-level theorem for lower bounds again, we obtain a lower bound for the expected runtime of $\sum_{j=0}^{n/k-1} \frac{v_{kj + k - 1}}{p_{kj + k - 1}} \ge (n/k)(1/4)n^{k-1}/(k+1) = \frac{n^k}{12(k+1)} = \Omega(n^k)$. 

  We finally turn to the Baldwinian EA. Since the fitness of any individual is the fitness of a local optimum, the same fitness levels are visited as for the Lamarckian EA, but a wider set of parent individuals may occur. More specifically, if the Baldwinian fitness of an individual is $jk+k-1$, then the individual can be any that has at least $k-1$ ones in each of the first $j$ blocks and at most $k-2$ ones in the $(j+1)$-st block (and these are all possibilities). 

  On this fitness level, the algorithm performs a Markov chain as follows. Whether the offspring is accepted is determined by the first $j$ blocks, namely whether the offspring has again at least $k-1$ ones in each of these blocks. This happens with probability at least $(1-1/n)^{kj} \ge \frac 1e$, simply by regarding the event that none of the first $kj$ bits is flipped. In the $(j+1)$-st block, each bit is flipped independently with probability $1/n$. By a simple union bound, the probability that some bit there is flipped is at most $k/n = \Theta(1/n)$, hence the typical event is that no bit is flipped. Again by a union bound over the sets of two bits in this block, the probability that more than one bit is flipped in this block is at most $\binom{k}{2} n^{-2} = \Theta(1/n^2)$. Hence, very roughly speaking, what happens in the $(j+1)$-st block is a pure random walk, slowed down by a factor of $\Theta(n/k)$, with very exceptional events that we move by more than one step. This almost-random walk ends when at least $k-1$ ones in the critical block are reached, as then the local search will bring us to a higher fitness level. All this allows us to use a runtime analysis of the randomized local search (RLS) heuristic on generalized needle functions~\cite{DoerrK25}, as follows. Consider a sequence of 
  \[
  t = 16 \cdot 2^k n
  \]
  iterations. The probability that in this time interval at least once two bits in the $(j+1)$-st block are flipped is at most $t \binom{k}{2} n^{-2} = O(1/n)$. Let us call an iteration a \emph{move} if the actions of the mutation operator are such that the offspring is accepted (as discussed, this is decided in the first $j$ blocks) and that exactly one bit in the $(j+1)$-st block is flipped. The probability that in an iteration none of the first $kj$ bits flips and exactly one in the $(j+1)$-st block is exactly $k (1/n) (1-1/n)^{kj+k-1} \ge \frac{k}{en}$. Hence this is a lower bound for the probability of a move. The expected number of moves in $t$ iterations therefore is at least $\frac{tk}{en} = \frac{16 \cdot 2^k k}{e} =: E$. By a multiplicative Chernoff bound together with an argument that the expectation can be replaced by a lower bound for it (e.g., Theorem~1.10.5 and Section 1.10.1.8 in~\cite{Doerr20bookchapter}), we obtain that the probability to have less than $E/2$ moves in these $t$ iterations is at most $\exp(-E/8) \le \exp(-8/e)$. Note that a move means that in the $(j+1)$-st block, we do a RLS step. By Theorem~1 of~\cite{DoerrK25}, the expected number of RLS steps it takes to reach at least $k-1$ ones, regardless of the starting point, is at most 
  \[
  \sum_{\ell=0}^{k-2} \binom{k}{\le \ell} \big/ \binom{k-1}{\ell} \le k 2^k / (k-1) \le 2^{k+1}, 
  \]
  where $\binom{k}{\le \ell} := \sum_{i=0}^{\ell} \binom{k}{i}$. Hence the probability that $E/2$ moves do not suffice, by Markov's inequality, is at most $2^{k+1} / (E/2) = e/4$. Adding up the failure probabilities, we see that the probability that $t$ iterations do not let us leave the fitness level is at most $O(1/n) + \exp(-8/e) + e/4$, which is at most $3/4$ when assuming $n$ to be large enough (which we may since we only prove an asymptotic bound). This proves that any interval of $t$ iterations lets us leave the current fitness level with probability at least $1/4$. This gives an expected time of at most $4t$ for leaving one level, and hence an expected runtime of $4t (n/k) = O(n^2)$.

  For the lower bound, we use again the fitness level method with visiting probabilities. The probability $v_{kj + k - 1}$ to visit a fitness level is the same as for the Lamarckian EA, since all arguments given there apply here as well. We estimate the probability $p_{kj + k - 1}$ to leave a level by $p_{kj + k - 1} \le k/n$, simply because to leave a level it is necessary to flip at least one bit in the critical block. From these $p_{kj + k - 1} =O(1/n)$ and $v_{kj + k - 1} = \Theta(1)$, noting that we have $\Omega(n)$ levels, we obtain the desired lower bound of $\Omega(n^2)$.\qed 
\end{proof}
\FloatBarrier

}
\end{document}